\def\eqref#1{equation~\ref{#1}}
\def\1{\bm{1}}
\def\rva{{\mathbf{a}}}
\def\rvb{{\mathbf{b}}}
\def\rvw{{\mathbf{w}}}
\def\rvx{{\mathbf{x}}}
\def\rvy{{\mathbf{y}}}
\def\rmI{{\mathbf{I}}}
\def\rmX{{\mathbf{X}}}
\def\vzero{{\bm{0}}}
\def\vone{{\bm{1}}}
\DeclareMathAlphabet{\mathsfit}{\encodingdefault}{\sfdefault}{m}{sl}
\SetMathAlphabet{\mathsfit}{bold}{\encodingdefault}{\sfdefault}{bx}{n}
\def\gE{{\mathcal{E}}}
\def\gG{{\mathcal{G}}}
\def\gN{{\mathcal{N}}}
\def\gP{{\mathcal{P}}}
\def\gQ{{\mathcal{Q}}}
\newcommand{\softmax}{\mathrm{softmax}}
\theoremstyle{plain}
\newtheorem{theorem}{Theorem}[section]
\newtheorem{proposition}[theorem]{Proposition}
\newtheorem{lemma}[theorem]{Lemma}
\theoremstyle{definition}
\newtheorem{assumption}[theorem]{Assumption}
\theoremstyle{remark}
\newtheorem{remark}[theorem]{Remark}
\newcommand{\inv}{\textnormal{inv}}
\newcommand{\spu}{\textnormal{spu}}
\newcommand{\nui}{\textnormal{nui}}
\newcommand{\xinv}[1][]{%
  \ifthenelse{\isempty{#1}}%
    {\rvx_{\inv}}% if #1 is empty
    {\rvx^{(#1)}_{\inv}}% if #1 is not empty
}
\newcommand{\xspu}[1][]{%
  \ifthenelse{\isempty{#1}}%
    {\rvx_{\spu}}% if #1 is empty
    {\rvx^{(#1)}_{\spu}}% if #1 is not empty
}
\newcommand{\xnui}[1][]{%
  \ifthenelse{\isempty{#1}}%
    {\rvx_{\nui}}% if #1 is empty
    {\rvx^{(#1)}_{\nui}}% if #1 is not empty
}
\newcommand{\dinv}{{d_{\inv}}}
\newcommand{\dspu}{{d_{\spu}}}
\newcommand{\dnui}{{d_{\nui}}}
\newcommand{\winv}{{\rvw_{\inv}}}
\newcommand{\wspu}{{\rvw_{\spu}}}
\newcommand{\wnui}[1][]{%
  \ifthenelse{\isempty{#1}}%
    {\rvw_{\nui}}% if #1 is empty
    {\rvw^{(#1)}_{\nui}}% if #1 is not empty
}
\newcommand{\sigmanui}{\sigma_{\nui}}
\newcommand{\ib}{{(i)}}
\newcommand{\Rb}{\mathbb{R}}
\newcommand{\Rc}{\mathcal{R}}
\newcommand{\Eb}{\mathbb{E}}
\newcommand{\Ec}{\mathcal{E}}
\newcommand{\Pc}{\mathcal{P}}
\newcommand{\Dt}{\widetilde{D}}
\newcommand{\yt}{\Tilde{y}}
\newcommand{\Pt}{\widetilde{\Pc}}
\newcommand{\Rh}{\widehat{\Rc}}
\newcommand{\zeroone}{\text{0-1}}
\newcommand{\ltnorm}[1]{\lVert #1 \rVert}
\newcommand{\ltnormsq}[1]{\lVert #1 \rVert^2}
\newcommand{\lonorm}[1]{\lVert #1 \rVert}
\title{Understanding Domain Generalization:\\
A Noise Robustness Perspective}
\author{Rui Qiao and Bryan Kian Hsiang Low \\
Department of Computer Science, National University of Singapore\\
\texttt{rui.qiao@u.nus.edu,lowkh@comp.nus.edu.sg}
}
\begin{document}

\maketitle

% It is OKAY to include author information, even for blind
% submissions: the style file will automatically remove it for you
% unless you've provided the [accepted] option to the icml2023
% package.

% List of affiliations: The first argument should be a (short)
% identifier you will use later to specify author affiliations
% Academic affiliations should list Department, University, City, Region, Country
% Industry affiliations should list Company, City, Region, Country

% You can specify symbols, otherwise they are numbered in order.
% Ideally, you should not use this facility. Affiliations will be numbered
% in order of appearance and this is the preferred way.

\begin{abstract}
Despite the rapid development of machine learning algorithms for domain generalization (DG), there is no clear empirical evidence that the existing DG algorithms outperform the classic empirical risk minimization (ERM) across standard benchmarks. To better understand this phenomenon, we investigate whether there are benefits of DG algorithms over ERM through the lens of label noise.
Specifically, our finite-sample analysis reveals that label noise exacerbates the effect of spurious correlations for ERM, undermining generalization. 
Conversely, we illustrate that DG algorithms exhibit implicit label-noise robustness during finite-sample training even when spurious correlation is present.
Such desirable property helps mitigate spurious correlations and improve generalization in synthetic experiments. 
However, additional comprehensive experiments on real-world benchmark datasets indicate that label-noise robustness does not necessarily translate to better performance compared to ERM. 
We conjecture that the failure mode of ERM arising from spurious correlations may be less pronounced in practice. Our code is available at \url{https://github.com/qiaoruiyt/NoiseRobustDG} 
\end{abstract}

\section{Introduction} \label{sec:intro}

A common assumption in machine learning is that the training and the test data are independent and identically distributed (i.i.d.) samples. In practice, the unseen test data are often sampled from distributions that are different from the one for training. For example, a camel may appear on the grassland during the test time instead of the desert which is their usual habitat \citep{Rosenfeld2020TheRO}. 
However, models such as overparameterized deep neural networks are prone to rely on spurious correlations that are only applicable to the train distribution and fail to generalize under the distribution shifts of test data \citep{hashimoto2018fairness, Sagawa2019DistributionallyRN, arjovsky2019invariant}.
It is crucial to ensure the generalizability of ML models, especially in safety-critical applications such as autonomous driving and medical diagnosis \citep{ahmad2018interpretable, yurtsever2020survey}. 

Numerous studies have attempted to improve the generalization of deep learning models regarding distribution shifts from various angles \citep{zhang2017mixup, Zhang2022CorrectNContrastAC, Nam2020LearningFF, koh2021wilds, Liu2021JustTT, Yao2022ImprovingOR, Gao2023OutofDomainRV}. An important line of work addresses the issue based on having the training data partitioned according to their respective domains/environments. For example, cameras are placed around different environments (near water or land) to collect sample photos of birds. Such additional environment information sparks the rapid development of \textit{domain generalization} (DG) algorithms \citep{arjovsky2019invariant, Sagawa2019DistributionallyRN, krueger2021out}. The common goal of these approaches is to discover the \textit{invariant} representations that are optimal in loss functions for all environments to improve generalization by discouraging the learning of spurious correlations that only work for certain subsets of data. However, \citet{gulrajani2020search} and \citet{ye2022ood} have shown that no DG algorithms clearly outperform the standard empirical risk minimization (ERM) \citep{vapnik1999nature}. Subsequently, \citet{Kirichenko2022LastLR} and \citet{rosenfeld2022domain} have argued that ERM may have learned good enough representations. These studies call for understanding whether, when, and why DG algorithms may indeed outperform ERM.

In this work, we investigate under what conditions DG algorithms are superior to ERM. We demonstrate that the success of some DG algorithms can be ascribed to their \textit{robustness to label noise} under subpopulation shifts. 
Theoretically, we prove that when using overparameterized models trained with finite samples under label noise, ERM is more prone to converging to suboptimal solutions that mainly exploit spurious correlations. The analysis is supported by experiments on synthetic datasets. In contrast, the implicit noise robustness of some DG algorithms provides an extra layer of performance guarantee. 
We trace the origin of the label-noise robustness by analyzing the optimization process of a few exemplary DG algorithms, including IRM \citep{arjovsky2019invariant}, V-REx \citep{krueger2021out}, and GroupDRO \citep{Sagawa2020AnIO}, while algorithms including ERM and Mixup \citep{zhang2017mixup} without explicit modification to the objective function, do not offer such a benefit. Empirically, our findings suggest that the noise robustness of DG algorithms can be beneficial in certain synthetic circumstances, where label noise is non-negligible and spurious correlation is severe. However, in general cases with noisy real-world data and pretrained models, there is still no clear evidence that DG algorithms yield better performance at the moment. 

\textit{Why should we care about label noise?} It can sometimes be argued that label noise is unrealistic in practice, especially when the amount of injected label noise is all but small. However, learning a fully accurate decision boundary from the data may not always be possible (e.g., some critical information is missing from the input). Such kinds of inaccuracy can be alternatively regarded as manifestations of label noise. Furthermore, a common setup for analyzing the failure mode of ERM assumes that the classifier utilizing only the invariant features is only partially predictive (having <100\% accuracy) of the labels \citep{arjovsky2019invariant, Sagawa2020AnIO, Shi2021GradientMF}. These setups are subsumed by our setting where there initially exists a fully predictive invariant classifier for the noise-free data distribution, and some degree of label noise is added afterward. 
More broadly, we analyze the failure mode of ERM without assuming the spurious features are generated with less variance (hence less noisy) than the invariant features.
By looking closer at domain generalization through the lens of noise robustness, we obtain unique understanding of when and why ERM and DG algorithms work. 

As our main contribution, we propose an inclusive theoretical and empirical framework that analyzes the domain generalization performance of algorithms under the effect of label noise:
\begin{enumerate}[leftmargin=*, itemsep=0pt]
\vspace{-3mm}
    \item We theoretically demonstrate that when trained with ERM on finite samples, the tendency of learning spurious correlations rather than invariant features for overparameterized models is jointly determined by both the degrees of \textit{spurious correlation} and \textit{label noise}. (Section~\ref{sec:understanding})
    \item We show that several DG algorithms possess the noise-robust property and enable the model to learn invariance rather than spurious correlations despite using data with noisy labels. (Section~\ref{sec:dg-analysis})
    \item We perform extensive experiments to compare DG algorithms across synthetic and real-life datasets injected with label noise but unfortunately find no clear evidence that noise robustness necessarily leads to better performance in general. 
    % We provide discussions about the difficulty of satisfying the theoretical conditions in practice to address this phenomenon. 
    To address this, we discuss the difficulties of satisfying the theoretical conditions and other potential mitigators in practice. (Section~\ref{sec:main-exp}, \ref{sec:discussion})
\end{enumerate}

\section{Related Work} \label{sec:related-work}
\textbf{Understanding Domain Generalization.} It has been demonstrated that ERM can fail in various scenarios. When spurious correlation is present, ERM tends to rely on spurious features that do not generalize well to other domains because of: model overparameterization \citep{Sagawa2020AnIO}, geometric failure from the max-margin principle \citep{Nagarajan2020UnderstandingTF, chaudhuri2023does}, feature noise \citep{khani2020feature}, gradient starvation \citep{pezeshki2021gradient}, etc. 
% However, there has been no explicit consideration of the effect of label noise, which is prevalent among datasets. 
Those setups assume that the invariant features are either partially or fully predictive of the labels. By incorporating label noise, our analysis applies to a broader range of settings. We also explicitly quantify the effect of spurious correlation along with label noise and provide a more intuitive understanding. 

\textbf{Algorithms for Domain Generalization.} Myriads of algorithms and strategies have been proposed from different angles of the training pipeline, including 1) new training objectives by considering the domain/group information \citep{arjovsky2019invariant, Sagawa2019DistributionallyRN, krueger2021out, Liu2021HeterogeneousRM, Zhang2021DeepSL, Ahuja2021InvariancePM, Zhou2023ModelAS}; 
2) optimization by aligning the gradients \citep{Koyama2020OutofDistributionGW, Parascandolo2020LearningET, Shi2021GradientMF, rame2022fishr};
3) efficient data augmentation \citep{zhang2017mixup, Verma2018ManifoldMB, Li2021ASF, Yao2022ImprovingOR, Gao2023OutofDomainRV};
4) strategically curated training scheme \citep{izmailov2018averaging, Nam2020LearningFF, cha2021swad, Liu2021JustTT, Zhang2022CorrectNContrastAC}. However, multiple works have benchmarked that ERM and its simple variants can already achieve competitive performance across datasets \citep{gulrajani2020search, YoubiIdrissi2021SimpleDB, khani2021removing, ye2022ood, chen2022does,yang2023change,liang2023accuracy}. The quality of features learned by ERM has also been assessed to be probably good enough \citep{Kirichenko2022LastLR, rosenfeld2022domain, izmailov2022feature}, challenging the practicality of various DG algorithms. Our work complements these studies with extensive experiments and discussions about the gap between theory and practice.

\textbf{Algorithmic Fairness} aims to remove the predictive disparity from groups \citep{dwork2012fairness, hardt2016equality} and its objectives are related to DG \citep{creager2021environment}. 
\citet{wang2021fair} analyze the harm of group-dependent label noise to models trained with fairness algorithms, while we show that uniform label noise hurts ERM but can be resisted by some DG algorithms.

\section{Domain Generalization with Label Noise} \label{sec:setup}
Let ($\mathcal{X}, \mathcal{Y}$) denote the space of input and label, respectively. Let $\mathcal{E}$ represent the set of environments. 
% = \{e_1, \dots, e_M\}$. 
Assuming that the environment information is known, denote a noise-free dataset by $D=\{(x^{(1)}, y^{(1)}, e^{(1)}), \dots, (x^{(N)}, y^{(N)}, e^{(N)})\}$, where each data point $(x^\ib, y^\ib, e^\ib) \in \mathcal{X} \times \mathcal{Y} \times \mathcal{E} $ is sampled i.i.d. from a data distribution $\mathcal{P}$. For clarity, we will slightly abuse the notation by omitting $e^\ib$ from the data point when we only need $(x^\ib,y^\ib)$. 
Each environment $e \in \mathcal{E}$ has its subset of data $D_e = \cup_{i:e^\ib=e} \{(x^\ib, y^\ib)\}$ with the corresponding distribution $\mathcal{P}_e$. Let $f \in \mathcal{F}$ be the classifier in the hypothesis space $\mathcal{F}$ and $\ell$ be the loss function. The risk for the environment $e$ is defined as:
\begin{align*}
    \textstyle \mathcal{R}_e(f) &= \Eb_{(x,y) \sim \Pc_e}[\ell(f(x), y)]\ , &\textstyle \Rh_e(f) = (1/{|D_e|}) \sum_{(x,y) \in D_e}\ell(f(x), y)\ .
\end{align*}
Given the training environments $\gE_{tr}$, the environment-balanced ERM estimator is:
\begin{equation*}
    \textstyle \Rh^{\text{ERM}}(f) = \sum_{e\in \gE_{tr}} \Rh_e(f)\ .
\end{equation*}
The classic environment-agnostic ERM can be viewed as just having one environment in the training set. On the other hand, the objective of DG is to minimize the out-of-distribution (OOD) risk: 
\begin{equation} \label{eqn:ood}
    \textstyle \Rc^{\text{OOD}}(f) = \max_{e \in \Ec} \mathcal{R}_e(f)\ .
\end{equation}
There are two major types of distribution shifts encountered in DG: \textit{domain shifts} and \textit{subpopulation shifts}. 
For domain shifts, the test environments generally have unseen populations with different compositions and features compared to the training environments. For example, cameras placed at different locations capture images for different populations with different backgrounds. Usually, there is no association between environments except that it is common to assume the existence of some domain-invariant feature that generalizes equally well to all training and test domains.

For subpopulation shifts, the training and the test environments share the same composition in terms of subpopulations, but the proportion of the subpopulations may vary. The subpopulations are also referred to as \textit{groups}, denoted as $\gG$.
Note that the notion of \textit{environment} is different from \textit{group}. In general, group information, often associated with specific spurious features/attributes, is more challenging to obtain than environment information in practice. Let $g \in \gG$ be an element of the groups and let $\gQ_g$ be its associated data distribution. For subpopulation shifts, we can rewrite the data distribution for each environment $e$ as $\gP_e=\sum_g \pi_g^e\gQ_g$, where $\pi_g^e \in [0, 1]$ is the mixing coefficient and satisfies $\sum_g \pi_g^e = 1$. Thus, the OOD risk in Equation~\ref{eqn:ood} can be rewritten as: $$\textstyle \Rc^{\text{OOD}}_{\text{subpop}}(f) = \max_{g \in \gG} \mathcal{R}_g(f)\ .$$
Therefore, we may evaluate the robustness to subpopulation shifts using the worst-group (WG) error in the test set. 
In this work, our theoretical results focus on subpopulation shifts, but we provide extensive empirical results for both types of distribution shifts. 

Let the label-independent noise level be $\eta$, which is then applied to the noise-free dataset $D$ and generate the noisy dataset $\Dt=\{(x^{(1)}, \yt^{(1)}, e^{(1)}), \dots, (x^{(N)}, \yt^{(N)}, e^{(N)})\}$, where $\yt^{(i)}$ is the result of randomly perturbing $y^\ib$ to another class with probability $\eta$. Let $\Pt$ be the corresponding noisy data distribution. Define the training risk of the classifier $f$ w.r.t.~the noisy data distribution as: 
\begin{align*}
    \textstyle &\mathcal{R^\eta}(f) = \Eb_{(x,\yt) \sim \Pt}[\ell(f(x), \yt)]\ , \textstyle&\mathcal{\Rh^\eta}(f) = \textstyle (1/N) \sum_{(x,\yt) \in \Dt}\ell(f(x), \yt)\ .
\end{align*}

\section{Understanding the Effect of Label Noise} \label{sec:understanding}
Our analysis primarily focuses on subpopulation shifts. One common issue for subpopulation shifts is the strong \textit{spurious correlations} between certain input features and the labels in the training set, meaning that such spurious features can accurately predict the labels for the \textit{majority} of the training data. This causes the trained classifier to rely heavily on the spurious features that do not generalize, especially to the \textit{minority} groups that have the inverse of such correlation. 
Consequently, it is challenging to ensure the OOD performance when the test set is dominated by minority groups.

To obtain more intuitive theoretical results, we opt for the setup of binary linear classification with overparameterization \textit{without} environment information \citep{Sagawa2020AnIO, Nagarajan2020UnderstandingTF}. For a training set $(\rmX, \rvy)$ sampled from $\Pc$, we have $\rmX \in \Rb^{n \times d}$, $d \gg n$ for overparameterization, and $\rvy \in \{0, 1\}^n$. Consider that there exists a fine-grained and \textit{disjoint} partition of the input features for each instance $\rvx = [\xinv, \xspu, \xnui]$, where $\xinv$ are invariant features that generalize across all environments $\Ec$, $\xspu \in \Rb^\dspu$ are spurious features that only have a high \textit{spurious correlation} $\gamma$ with label $y$ in the training environments $\Ec_{tr}$, and $\xnui \in \Rb^\dnui$ are nuisance features sampled from a high-dimensional isotropic Gaussian distribution $\xnui \sim \mathcal{N}(\vzero_{\dnui}, \sigma_{\nui}^2\rmI_{\dnui})$ and irrelevant to the task, but nonetheless provide linear separability for both the noise-free and the noisy training sets $D$ and $\Tilde{D}$ due to overparameterization of the model and underspecification of finite samples \citep{d2022underspecification}. We use $\gamma \in (0.5, 1)$ exclusively for the correlation between spurious features and the label. The cardinalities of the features $d = \dinv + \dspu + \dnui$ and $\dnui \gg \dinv+\dspu$. 
This also resembles practical situations such as image classification where for the high-dimensional inputs, there are only a few invariant and spurious features that work for the classification task, but deep neural networks have high capacity and hence the ability to memorize using the nuisance features. 
Let the overparameterized linear classifier $f(\rvx) = \rvw^\top \rvx$, where the weights $\rvw \in \Rb^d$ can also be decomposed into $[\winv, \wspu, \wnui]$ w.r.t. $[\xinv, \xspu, \xnui]$. We drop the bias term $b$ for simplicity.

\begin{assumption}[Linear Separability for Invariant Features] \label{assump:separable}
    The noise-free data distribution $\Pc$ is linearly separable using just the invariant features $\xinv$. 
\end{assumption}
This assumption ensures that the invariant features are sufficiently expressive and have an accuracy that is at least as good as the spurious features for any environments or groups in a noise-free setting. When combined with label noise level $\eta \in [0, 0.5]$, this also accommodates the scenario with imperfect invariant classifiers. Thus, the assumption is in fact more inclusive than it is restrictive. 

\subsection{Failure-Mode Analysis for ERM due to Label Noise} \label{sec:erm-failure}
We first provide an analysis of the failure mode for ERM given a finite and noisy dataset \textit{without} any environment information. Let $\rvw^{(\inv)}$ be the optimal classifier that only uses the invariant and nuisance features (i.e., the weights for spurious features $\rvw^{(\inv)}_{\spu}=\vzero$, but not necessarily for the invariant component $\rvw^{(\inv)}_{\inv}$ and nuisance component $\rvw^{(\inv)}_{\nui}$). Denote the $\ell_2$-norm for the invariant component as $\ltnorm{\rvw^{(\inv)}_{\inv}}$. Similarly, let $\rvw^{(\spu)}$ and $\ltnorm{\rvw^{(\spu)}_{\spu}}$ be the optimal classifier that only uses the spurious and nuisance features (i.e., $\rvw^{(\spu)}_{\inv}=\vzero$) and its associated $\ell_2$-norm. We assume that $\ltnorm{\rvw^{(\inv)}_{\inv}} \geq \ltnorm{\rvw^{(\spu)}_{\spu}}$. Since norms are usually the complexity measure of the models, this assumption conforms to the practical cases that invariant correlations are usually more complex and harder to learn compared to spurious correlations (recall the cow-camel example in Section~\ref{sec:intro}). When ERM is the objective function, both types of classifiers $\rvw^{(\inv)}$ and $\rvw^{(\spu)}$ could reach the global optimum on the training set by learning the weights corresponding to the invariant/spurious features and memorizing all other noisy samples (because both classifiers achieve near 0 loss and gradient).

It has been demonstrated that a classifier optimized over ERM with gradient descent (GD) has the implicit bias of converging to a solution with minimum $\ell_2$-norm while minimizing the training objective \citep{zhang2021understanding}, i.e., the \textit{min-norm bias}. More details about this on logistic regression can be found in Appendix \ref{app:minnorm}. To show that the model does not rely only on the invariant features, it suffices to find a classifier that achieves near 0 loss with less norm than $\winv$. We then analyze \textit{which type of classifier has a smaller norm} and is thus preferred by GD under the noisy and overparameterized linear classification setting. 
Due to the simplicity bias of deep learning \citep{geirhos2020shortcut, Shah2020ThePO} and abundant empirical observations, we assume that during training, the classifier first reaches a small-norm solution that only uses spurious features $\xspu$. 
 % without utilizing $\xinv$ or $\xnui$
Thereafter, to further minimize the noisy empirical risk $\mathcal{\Rh^\eta}(f)$, the classifier either (1) memorizes the noisy data, or (2) gradually picks up the invariant features to correctly classify the remaining noise-free non-spurious data \citep{Arpit2017ACL, Nagarajan2020UnderstandingTF} and potentially become $\rvw^{(\inv)}$. 
If $\rvw^{(\inv)}$ has a larger norm, it is harder for the model to escape the spurious solution using option (2).

\begin{theorem} \label{theorem:norm-spu-main}
     Under Assumption \ref{assump:separable}, if $\ltnormsq{\rvw^{(\inv)}_{\inv}} - \ltnormsq{\rvw^{(\spu)}_{\spu}} \geq n(1-\gamma)(1-2\eta)C$, then with high probability, $\ltnorm{\rvw^{(\inv)}} \geq \ltnorm{\rvw^{(\spu)}}$ , where $C>0$ is a constant for memorization cost. 
\end{theorem}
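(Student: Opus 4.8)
The plan is to compare the two squared norms block by block and reduce the claim to a statement about the cost of \emph{memorizing} mislabeled points in the nuisance coordinates. Since $\rvw^{(\inv)}_{\spu}=\vzero$ and $\rvw^{(\spu)}_{\inv}=\vzero$ by definition, a Pythagorean decomposition gives $\ltnormsq{\rvw^{(\inv)}} = \ltnormsq{\rvw^{(\inv)}_{\inv}} + \ltnormsq{\rvw^{(\inv)}_{\nui}}$ and $\ltnormsq{\rvw^{(\spu)}} = \ltnormsq{\rvw^{(\spu)}_{\spu}} + \ltnormsq{\rvw^{(\spu)}_{\nui}}$. Consequently the desired inequality $\ltnorm{\rvw^{(\inv)}} \ge \ltnorm{\rvw^{(\spu)}}$ is equivalent to
\[
\ltnormsq{\rvw^{(\inv)}_{\inv}} - \ltnormsq{\rvw^{(\spu)}_{\spu}} \;\ge\; \ltnormsq{\rvw^{(\spu)}_{\nui}} - \ltnormsq{\rvw^{(\inv)}_{\nui}},
\]
so it suffices to show that the extra nuisance norm paid by the spurious solution, the right-hand side, is at most $n(1-\gamma)(1-2\eta)C$ with high probability.

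The second step is to count how many points each solution must memorize. Under Assumption~\ref{assump:separable} the invariant features separate the noise-free labels, so $\rvw^{(\inv)}_{\inv}$ already predicts every clean label correctly and $\rvw^{(\inv)}$ only needs its nuisance block to fit the label-flipped points, of which there are $\eta n$ in expectation. For the spurious solution, a point is fit by $\rvw^{(\spu)}_{\spu}$ exactly when the spurious feature agrees with the observed (possibly flipped) label, which occurs with probability $\gamma(1-\eta)+(1-\gamma)\eta$; hence $\rvw^{(\spu)}$ must memorize $[\gamma\eta+(1-\gamma)(1-\eta)]\,n$ points in expectation. The difference between these two counts is exactly $n(1-\gamma)(1-2\eta)$, which is nonnegative since $\gamma\in(0.5,1)$ and $\eta\in[0,0.5]$. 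A Chernoff/Hoeffding bound over the independent label-flip indicators and spurious-match indicators shows both counts concentrate around their means, furnishing the ``with high probability'' qualifier.

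The third step converts memorized-point counts into nuisance norms. Because the nuisance block is isotropic Gaussian with $\dnui \gg n$, the rows $\xnui[i]$ are, with high probability, nearly orthogonal and of comparable length, so the minimum-norm nuisance vector that imposes the required classification margin on a set of $m$ memorized points has squared norm that decouples across points and scales as $mC$, where the per-point cost $C$ is governed by $\sigmanui$ and the target margin. This yields $\ltnormsq{\rvw^{(\inv)}_{\nui}} \approx \eta n C$ and $\ltnormsq{\rvw^{(\spu)}_{\nui}} \approx [\gamma\eta+(1-\gamma)(1-\eta)]\,n C$, whence $\ltnormsq{\rvw^{(\spu)}_{\nui}} - \ltnormsq{\rvw^{(\inv)}_{\nui}} \le n(1-\gamma)(1-2\eta)C$ up to the concentration error. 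Combining this bound with the theorem's hypothesis on $\ltnormsq{\rvw^{(\inv)}_{\inv}} - \ltnormsq{\rvw^{(\spu)}_{\spu}}$ closes the equivalent inequality from the first step.

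I expect the main obstacle to be the third step: making the ``memorization norm equals $mC$'' claim rigorous. This needs random-matrix control of the Gaussian nuisance Gram matrix --- bounding its extreme singular values so the interpolation system is well-conditioned, and exploiting the near-orthogonality of the rows so the per-point costs become approximately additive --- together with a careful specification of the margin each memorized point must attain, so that $C$ is genuinely a single shared constant rather than a point- or solution-dependent quantity. The coupling in the spurious case, where the relevant memorization event depends jointly on the noise flip and the sign of the spurious feature, also has to be handled with care when invoking concentration.
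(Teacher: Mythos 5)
Your proposal follows essentially the same route as the paper's proof in Appendix C.1: a Pythagorean split of the squared norms, the same expected memorization counts ($\eta n$ for the invariant solution and $[1-\gamma+(2\gamma-1)\eta]n$ for the spurious one, whose difference is exactly $n(1-\gamma)(1-2\eta)$), and the same per-point nuisance cost $C=\beta^2\dnui\sigmanui^2$ obtained from near-orthogonality and Bernstein concentration of the high-dimensional Gaussian nuisance features. The rigor concerns you raise about the third step are fair, but the paper's own argument is no more rigorous there --- it constructs $\wnui^{(S)}=\sum_{i\in S}\beta y^{(i)}\xnui^{(i)}$ and works with the same approximate additivity you describe.
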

For readability, we use $C$ to represent the average memorization cost. The proof is available in Appendix \ref{proof:theorem-norm}. 
The theorem describes the condition required for the squared $\ell_2$-norm between the invariant component $\rvw^{(\inv)}_{\inv}$ of $\rvw^{(\inv)}$ and the spurious component $\rvw^{(\spu)}_{\spu}$ of $\rvw^{(\spu)}$, such that the $\ell_2$-norm of the desirable invariant classifier $\rvw^{(\inv)}$ will be larger than the undesirable spurious classifier $\rvw^{(\spu)}$. 
It implies that having a more severe spurious correlation $\gamma$ and higher noise level $\eta$ both make the condition in Theorem~\ref{theorem:norm-spu-main} easier to satisfy.
By the min-norm bias of GD, 
 $\rvw^{(\spu)}$ having a lower norm than $\rvw^{(\inv)}$ results in the overparameterized linear classifier to prefer exploiting the spurious correlation appearing in the training set and pay less attention to the invariant features, which hurts the generalization performance on minority groups in the test set. Besides, the theorem has intuitive interpretations of different factors. 
When the spurious correlation $\gamma \rightarrow 1$, the spurious features become invariant for the training environments, and the gap condition in Theorem \ref{theorem:norm-spu-main} is satisfied ($\ltnormsq{\rvw^{(\inv)}_{\inv}} - \ltnormsq{\rvw^{(\spu)}_{\spu}} \geq 0$) so that this ``newborn'' invariant features with less norm would be favored. 
Furthermore, when the noise level $\eta \rightarrow 0.5$, there will be no useful features under this binary setting, and there is no difference between spurious features and invariant features anymore. As the gap condition is also satisfied, the spurious dummy classifier with the least norm is preferred. 
Lastly, when $n$ grows, the gap condition is enlarged and becomes harder to satisfy. This means that having more data could gradually resolve the harm from both spurious correlation and label noise. 
% The question of categorizing features as invariant or spurious is profound when the correlation $\gamma \rightarrow 1$, especially in the absence of any prior knowledge, e.g., the features are not directly interpretable like colors and background where we are certain that they are spurious. If multiple such features with the same correlation exist, the classifier would naturally prefer the ``simpler'' one with the least norm. 

\begin{figure}
\centering
    \begin{subfigure}[b]{0.24\columnwidth}
    \centering	
    \includegraphics[width=\columnwidth]{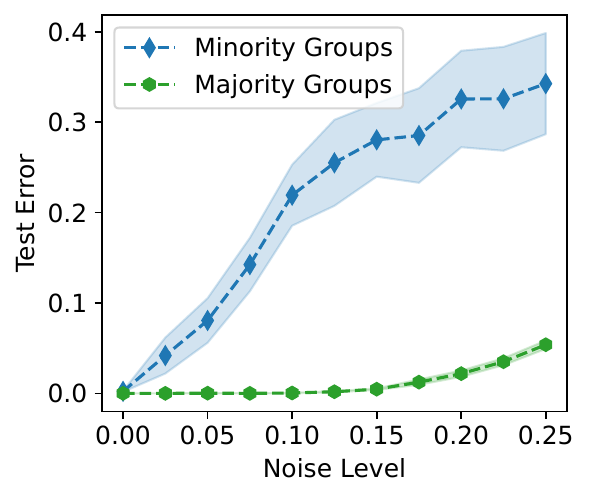}
    % \caption{Label noise vs. test error.}
    \caption{Noise vs. test err}
    \label{fig:noise}
    \end{subfigure}
    % \hspace{0.2in}
    \begin{subfigure}[b]{0.24\columnwidth}
    \centering	
    \includegraphics[width=\columnwidth]{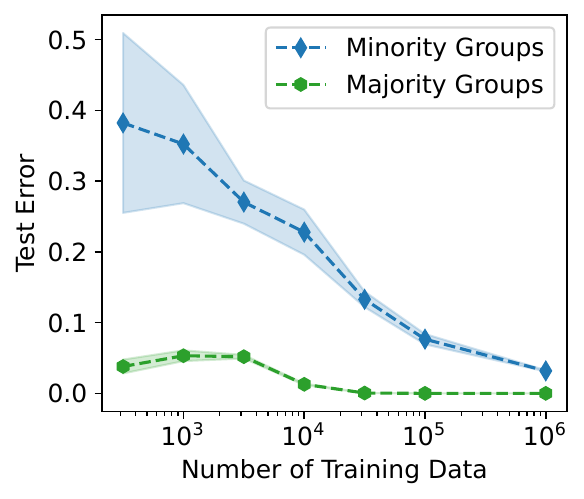}
    \caption{Num of data vs. test err}
    \label{fig:ndata}
    \end{subfigure}
    % \hfill
    \begin{subfigure}[b]{0.24\columnwidth}
    \centering	
    \includegraphics[width=\columnwidth]{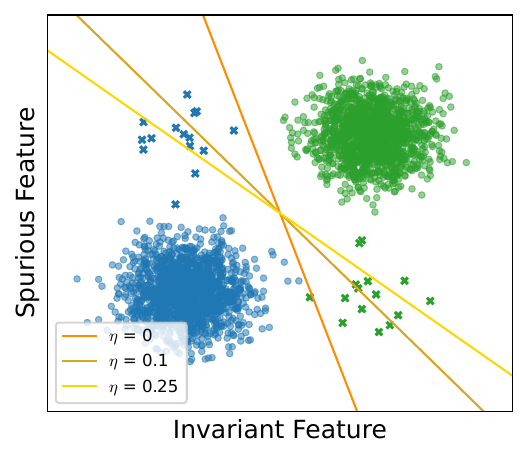}
    \caption{Decision boundaries}
    \label{fig:decision-boundary}
    \end{subfigure}
    \begin{subfigure}[b]{0.24\columnwidth}
    \centering
    % \hfill
    \includegraphics[width=\columnwidth]{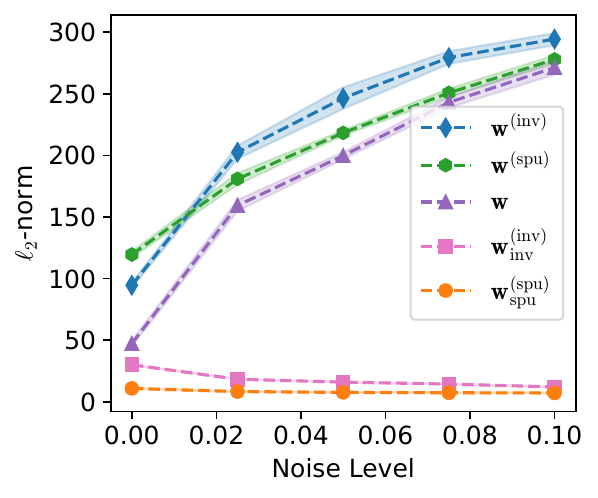}
    % \hspace{0.2in}
    \caption{Noise vs. weight norm}
    \label{fig:noise-norm}
    \end{subfigure}
    % \hspace{0.2in}
    \caption{Simulation on synthetic data trained with overparameterized logistic regression. The dotted lines are the means and the shaded regions are for the standard error from 5 independent runs. 
    Figure~\ref{fig:noise} shows that the minority-group (worst-group) error increases much more than the majority-group error when label noise is injected. 
    Figure~\ref{fig:ndata} indicates that gathering more data effectively reduces all test errors despite the presence of label noise.
    Figure~\ref{fig:decision-boundary} visualizes the learned decision boundaries for the same training set under different $\eta$, where markers ``$\circ$'' and ``$\times$'' are for the majority and the minority groups respectively. All data points are colored by the true labels. By adding more label noise, the classifier becomes more skewed towards using the spurious features. Figure~\ref{fig:noise-norm} shows that as more noise is present, $\rvw^{(\spu)}$ indeed tends to have a smaller norm than $\rvw^{(\inv)}$, even though it is bigger when $\eta=0$. }
    \label{fig:synthetic}
\end{figure}

To further illustrate the result, we study the effect of label noise on toy data trained with overparameterized logistic regression under the same setting. For each class in \{0, 1\}, the data that can be classified correctly by only $\xspu$ forms a majority group, and those that cannot belong to the minority group. We let there be a 1-dimensional latent variable that controls the generation for invariant and spurious features respectively. We set $\dinv=\dspu=5$, $\gamma=0.99$, $n=1000$, $\dnui=3000$. The setup is modified from \citet{Sagawa2020AnIO} and more details are available in Appendix~\ref{app:synthetic}.

As shown in Figure~\ref{fig:noise}, despite ERM generalizing to minority groups in the test set almost perfectly when it is trained with high spurious correlation on noise-free data, gradually adding noise degrades the minority (worst-case) performance significantly. Moreover, label noise increases the classifier's dependence on the spurious features, as the decision boundary becomes more tilted in Figure~\ref{fig:decision-boundary}. 
This complements the previous analyses that attribute the failure mode of ERM to the geometric skew \citep{Nagarajan2020UnderstandingTF} of the max-margin classifier in the noise-free setup. Our study shows that adding label noise also intensifies the skew. 
Since the classification error for the minority group goes up gradually, one interpretation is that adding more noise gradually reduces the number of effective data points from the minority groups for invariance learning. Consequently, sampling more data could help mitigate the effect of label noise and reduce test error on the minority groups, and we verify that by adding more data shown in Figure~\ref{fig:ndata}. Thus, data collection and augmentation are still the keys to improving domain generalization. 
In addition, we train the classifiers $\rvw^{(\inv)}$ and $\rvw^{(\spu)}$ and plot the related $\ell_2$-norms w.r.t. increasing noise level in Figure \ref{fig:noise-norm}.
As the noise level goes up, $\rvw^{(\spu)}$ indeed tends to have a smaller norm than $\rvw^{(\inv)}$. 
At the same time, the change in norm gap between $\rvw^{(\spu)}_{\spu}$ and $\rvw^{(\inv)}_{\inv}$ is relatively small. Moreover, the classifier $\rvw$ that uses all features also gets closer to the spurious classifier $\rvw^{(\spu)}$. 
It also hints that $\ell_2$-regularization may not help with ERM for better robustness to subpopulation shifts when the label noise is present. 

% Although our framework does not explicitly address the case where spurious features are generated with less variance than invariant features \citep{Sagawa2020AnIO}, our theorem encompasses this scenario since the spurious classifier would have an even smaller norm due to less memorization and become more favored by the model. 

\section{The Implicit Noise Robustness of DG Algorithms} \label{sec:dg-analysis}

We analyze a few exemplary baseline algorithms used for DG on their noise robustness. In particular, invariance learning (IL) algorithms such IRM and V-REx have better label-noise robustness by providing more efficient cross-domain regularization. We show the analysis for IRM and leave the rest in Appendix~\ref{app:grad}. Let the classifier be $f = w \circ \Phi$, where $w$ is linear and $\Phi$ is the features/logits extracted from the input $x$. The practical surrogate for IRM objective on the training set is:
\begin{align*}
    \Rh^{\text{IRMv1}} &= \textstyle \sum_{e \in \Ec_{tr}} \Rh_e(w \circ \Phi_e) + \lambda \lVert \nabla_{w|w=1.0} \Rh_e(w \circ \Phi_e) \rVert^2\ ,
\end{align*}
where $\lambda$ is the regularization hyperparameter. 
In particular, IRM aims to learn representations that are local optima for all environments in addition to minimizing the sum of all risks. Suppose that these representations are invariant.
Intuitively, since the noisy data need to be memorized independently, each of such costly memorization is non-invariant across domains and induces extra penalties on the regularization terms for domain invariance. For binary classification trained with the logistic loss (equivalent to the cross-entropy loss), the gradient for IRMv1 can be written as: 
\begin{align*}
    \textstyle \nabla_{\theta} \Rh^{\text{IRMv1}} &= \left(1+2\lambda \phi[\sigma(\phi)+\phi \sigma(\phi)(1-\sigma(\phi)) - y]\right) \nabla_{\theta} \Rh_e = \alpha(\phi) \nabla_{\theta} \Rh_e\ ,
\end{align*}
% IL algorithms
where $\phi$ is the predictive logit for an instance $x$, $\sigma$ is the sigmoid function, and $\nabla_{\theta} \Rh_e$ is the ERM gradient w.r.t. the model parameters. As we have rewritten the IRM gradient as ERM gradient multiplied by a coefficient $\alpha(\phi)$, we can compare the two optimization trajectories directly. We plot the coefficient function $\alpha$ w.r.t. logit $\phi$ in Figure~\ref{fig:grad-irm-main}. Since $\alpha$ is symmetric w.r.t. $y=0$ and $y=1$, we only need to analyze for $y=1$. During training, $\phi$ grows for $y=1$ with ERM gradient. However, when $\phi$ is larger than a certain threshold (inversely proportional to the regularization strength $\lambda$), $\alpha(\phi)$ becomes negative, which reverses the gradient direction and does the opposite of minimizing the ERM loss by decreasing $\phi$. This behavior with strong regularization generally prevents the model from predicting training instances with high probabilities and memorizing the noisy samples with overconfidence. When the regularization for IRM is sufficiently strong ($\lambda>100$ in practice), the loss for a single sample would oscillate back and forth instead of saturating. 
\begin{wrapfigure}{r}{0.5\textwidth}
\centering
    \begin{subfigure}[b]{0.24\columnwidth}
    \includegraphics[width=\columnwidth]{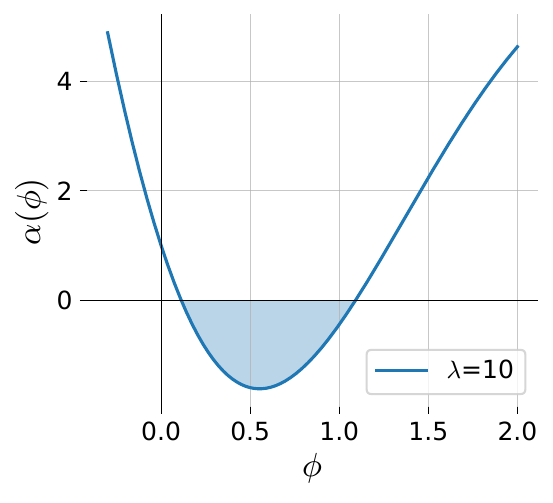}
    \caption{$y=1, \lambda=10$}
    \label{fig:lambda10y0}
    \end{subfigure}
    % \hspace{0.2in}
    \begin{subfigure}[b]{0.24\columnwidth}
    \centering	
    \includegraphics[width=\columnwidth]{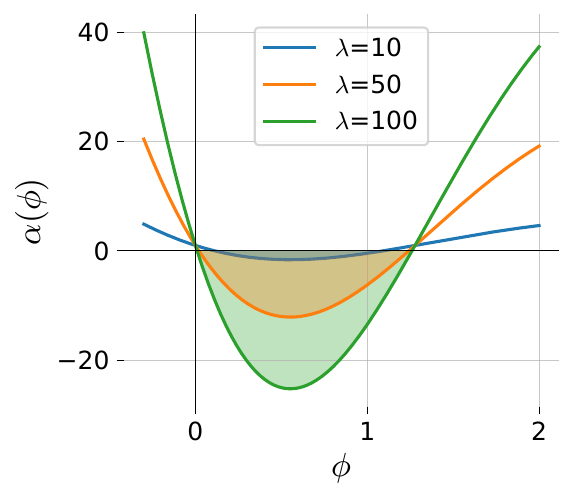}
    \caption{$y=1$, varying $\lambda$}
    \label{fig:lambda100y0}
    \end{subfigure}
    \caption{IRMv1 gradient coefficient function $\alpha$ w.r.t. regularization strength $\lambda$. The shaded area represents $\alpha(\phi)<0$. As $\lambda$ increases, the valley below 0 also deepens, providing stronger resistance.}
    \label{fig:grad-irm-main}
    \vspace{-4mm}
\end{wrapfigure}
This is part of the reason that we focus on analyzing the gradients of the algorithms trained on finite samples instead of performing convergence analysis. 
The gradient for V-REx has a similar property and we leave the analysis in Appendix~\ref{app:vrex}.
% Mixup and gDRO
On the other hand, the objectives for ERM, Mixup, and GroupDRO do not explicitly penalize memorization. As long as the model capacity allows, memorizing noisy data always leads to lower training loss and is encouraged by gradient descent. Thus, they are likely to have poorer noise robustness and worse generalization. However, since GroupDRO prioritizes optimizing the worst group, it generally slows down noise memorization in the short horizon. We will verify these claims in Section~\ref{sec:main-exp}.

\section{Is Noise Robustness Necessary in Practice?} \label{sec:main-exp}
We empirically investigate how label noise affects OOD generalization across various simulated and real-life benchmarks. 
We adopt the Domainbed framework \citep{gulrajani2020search} with standard configurations and add varying levels of label noise $\eta$ to the training environments of the datasets. We assume the availability of a small validation set from the test distribution for model selection. For a fair and comprehensive comparison, Domainbed implements 3 trials of 20 sets of hyperparameters, which may not be sufficient to reproduce the best results for all algorithms. We evaluate the robustness to label noise of the DG algorithms based on the average OOD accuracy on the test set of the best-performing models (selected by the validation set) across 3 trials. 

\subsection{Datasets}
\textit{Subpopulation shifts (Synthetic).} 
\textbf{CMNIST} (ColoredMNIST) \citep{arjovsky2019invariant} is a synthetic binary classification dataset based on MNIST \citep{lecun1998mnist} with class label $\{0, 1\}$. There are four groups: green 1s $(g_1)$, red 1s $(g_2)$, green 0s $(g_3)$, and red 0s $(g_4)$. The color of digit has a high spurious correlation $\gamma$ with the label in the training set, where most of the instances are from $g_1$ and $g_4$, which can be viewed as the majority groups. There are two training environments $e_1, e_2$ with different degrees of spurious correlations $\gamma_1=0.9, \gamma_2=0.8$, meaning that using only color achieves $85\%$ training accuracy. However, the minority groups $g_2, g_3$ dominates the test set with $\gamma_{\textnormal{test}}=0.1$.

\textit{Subpopulation shifts (Real-world).} Waterbirds \citep{wah2011cub, Sagawa2019DistributionallyRN} is a binary bird-type classification dataset that has the label spuriously correlated with the image background. There are four groups: waterbirds on water $(\gG_1)$, waterbirds on land $(\gG_2)$, landbirds on water $(\gG_3)$, and landbirds on land $(\gG_4)$. The majority groups are $\gG_1$ and $\gG_4$, and the minority groups are $\gG_2$, $\gG_3$. Similarly, CelebA \citep{liu2015deep, Sagawa2019DistributionallyRN} is a binary hair color prediction dataset and blondness is much less correlated with the gender male than female, so blond male is the minority group and the rest are the majority. Based on these two datasets, we consider two types of simulated environments: 
(1) \textbf{Waterbirds} and \textbf{CelebA}: Directly treat the 4 groups as 4 environments, which is similar to the common setups, but we allow all algorithms including ERM to access the group information. However, this setup can be unfeasible when the group information is not available. 
(2) \textbf{Waterbirds+} and \textbf{CelebA+}: To simulate a more realistic setting, we create two derivative datasets based on Waterbirds and CelebA. There are two environments $e_1, e_2$. Each environment has half the data from each majority group. Then for each minority group, $1/3$ of the data is allocated to $e_1$ and the rest $2/3$ is allocated to $e_2$. This resembles the CMNIST setup so that $e_2$ has twice the amount of the minority data as $e_1$. This setting is less restrictive and the difference in the degree of spurious correlations represents a type of subpopulation shift. 
We also extend the result to the language modality using \textbf{CivilComments} \citep{borkan2019nuanced} in Appendix~\ref{app:additional}.

\textit{Domain shifts (Real-world).} Based on Domainbed \citep{gulrajani2020search}, we use \textbf{PACS}, \textbf{VLCS}, \textbf{OfficeHome}, and \textbf{TerraIncognita}. Each dataset has multiple environments. Each environment has specific populations that have unique features that are different from the ones in other environments. This is distinct from subpopulation shifts where only the composition of the population changes across environments. We leave the full descriptions of all datasets in Appendix \ref{app:datasets}.

\subsection{Experiments on Synthetic CMNIST Data} \label{sec:exp-cmnist}

\begin{figure}
\centering
    \begin{subfigure}[b]{0.49\columnwidth}
    \captionsetup{justification=centering}
    \centering	
    \includegraphics[width=.49\columnwidth]{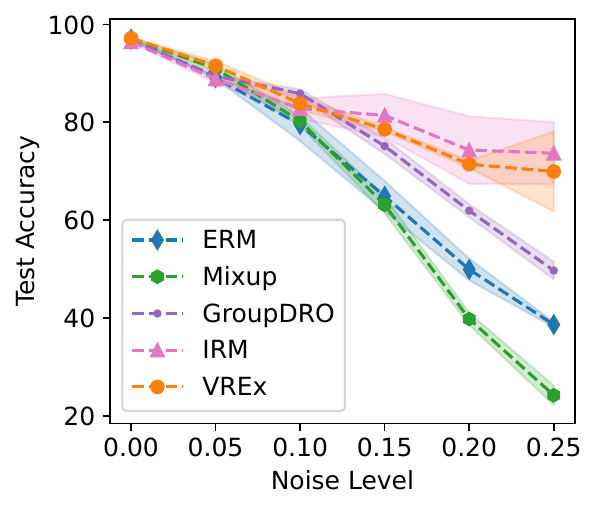}
    \includegraphics[width=.49\columnwidth]{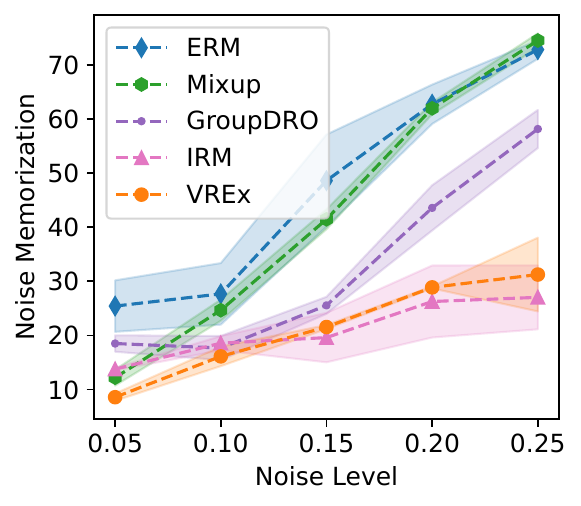}
    \caption{Regular training for 5k steps}
    \label{fig:cmnist-5k}
    \end{subfigure}
    \begin{subfigure}[b]{0.49\columnwidth}
    \captionsetup{justification=centering}
    \centering	
    \includegraphics[width=.49\columnwidth]{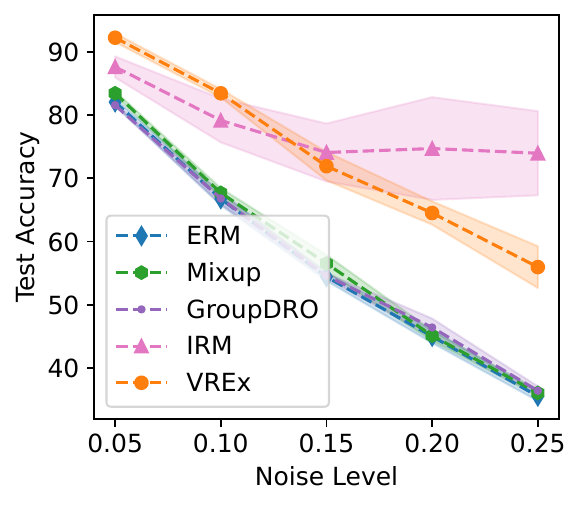}
    \includegraphics[width=.49\columnwidth]{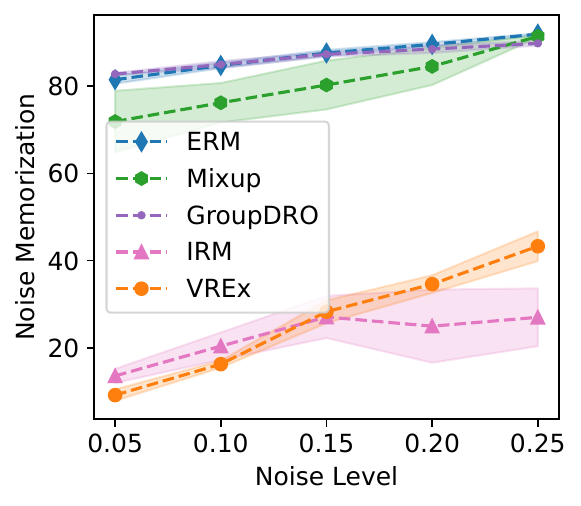}
    \caption{Extended training for 20k steps}
    \label{fig:cmnist-20k}
    \end{subfigure}
    \caption{Simulation on CMNIST dataset. As the noise level $\eta$ increases, both IRM and V-REx exhibit better generalization and noise robustness compared to approaches with ERM objectives.}
    \label{fig:cmnist}
\end{figure}

For CMNIST, we train simple Convolutional Neural Network (CNN) models for 5k steps. Only models at the last epoch are compared for validation performance (i.e., no early stopping). 

As shown in Figure \ref{fig:cmnist-5k}, when the noise level $\eta$ is close to 0, ERM performs competitively even without using the environment label information compared to other DG algorithms. Since on average $\gamma=0.85$ in the training set, the degree of spurious correlation is not severe enough to make ERM fail in the noise-free case. However, when the noise level goes up, IL algorithms (IRM, V-REx) significantly outperform ERM-based algorithms (ERM, Mixup). The performance gap is enlarged when the noise level $\eta$ gets higher because the accuracy of ERM-based algorithms deteriorates much faster. On the other hand, GroupDRO is slightly behind IRM and V-REx but has a clear advantage over ERM. 
Moreover, we track the accuracy of the noisy data as a surrogate of noise memorization in Figure \ref{fig:cmnist-5k}, which shows that DG algorithms naturally are more robust to label noise memorization compared to vanilla ERM-based approaches. Furthermore, a strong negative correlation can be observed between noise memorization and the test error. 

To further examine noise memorization, we ``overfit'' CMNIST dataset with 25\% label noise by extended training for 20k steps in Figure \ref{fig:cmnist-20k}. IL algorithms such as IRM, V-REx with strong invariance regularization do not memorize the noisy data severely, despite the complete memorization still being a valid local minimum (all environments have 0 and equal loss). On the other hand, ERM, Mixup, and GroupDRO suffer from noise memorization in the long horizon. The phenomenon of GroupDRO having contrasting results when trained with different steps aligns with the previous conclusion that GroupDRO works well with strong regularization such as early stopping \citep{Sagawa2019DistributionallyRN}.

% \begin{wraptable}{R}{50mm}
% \begin{wraptable}{hr}{0.36\textwidth}
% \vspace{-3.25mm}
% \small
% \setlength{\tabcolsep}{4.5pt}
% \begin{tabular}{lcc}
% \toprule
% \textbf{Algorithm}        & \textbf{Test Acc}   & \textbf{Noise Acc}              \\
% \midrule
% ERM                       & $30.5\pm_{1.0}$           & $91.5\pm_{1.1}$           \\
% Mixup                     & $29.8\pm_{0.8}$           & $91.6\pm_{0.4}$           \\\hline
% GroupDRO                  & $29.2\pm_{0.6}$           & $92.4\pm_{0.6}$           \\
% IRM                       & $\textbf{66.2}\pm_{2.3}$           & $\textbf{22.2}\pm_{4.6}$           \\
% VREx                      & $46.3\pm_{3.1}$           & $50.0\pm_{0.8}$           \\
% \bottomrule
% \end{tabular}
% \caption{Test accuracy and noisy training data memorization for models trained for 20k steps. $\eta=0.25$.}
% \label{tab:prolonged}
% \vspace{-4mm}
% \end{wraptable}

\subsection{Experiments on Real-World Data} \label{sec:exp-realworld}
 We add 10\% and 25\% label noise to the benchmark datasets and compare the out-of-distribution performance for various algorithms. For all datasets, we use ResNet-50 \citep{he2016deep} pretrained on ImageNet \citep{deng2009imagenet}. 
 For the real-world subpopulation-shift datasets, we report the worst-group (WG) accuracy of the model on the test set selected according to the validation WG performance. 
 For the domain-shift datasets, we perform single-domain cross-test experiments by setting each domain as the test set and report the average accuracy of the model selected using 20\% of the test data. Standard data augmentation is performed. More details are available in Appendix \ref{app:exp}.

\begin{table}[h]
\small
\centering
\setlength{\tabcolsep}{4.5pt}
\begin{tabular}{lccccccccc}
\toprule
\textbf{Dataset} & \multicolumn{2}{c}{\bfseries Waterbirds+} & \multicolumn{2}{c}{\bfseries Waterbirds} & \multicolumn{2}{c}{\bfseries CelebA+} & \multicolumn{2}{c}{\bfseries CelebA} \\\cmidrule(lr){1-1}  \cmidrule(lr){2-3} \cmidrule(lr){4-5} \cmidrule(lr){6-7} \cmidrule(lr){8-9}
\textbf{Alg / $\eta$}        & \textbf{0}       & \textbf{0.1}     & \textbf{0}       & \textbf{0.1}    & \textbf{0}       & \textbf{0.1}    & \textbf{0}       & \textbf{0.1}       \\
\midrule
ERM                       & $81.5\pm_{0.4}$           & $64.6\pm_{0.5}$           & $86.7\pm_{1.0}$           & $61.4\pm_{1.3}$      & $71.7\pm_{2.9}$           & $65.4\pm_{1.9}$           & $88.1\pm_{1.2}$           & $\textbf{63.1}\pm_{1.1}$           \\
Mixup                     & $79.8\pm_{0.9}$           & $61.8\pm_{2.0}$           & $\textbf{88.0}\pm_{0.4}$           & $56.9\pm_{0.9}$      & $71.1\pm_{1.2}$           & $64.6\pm_{1.8}$           & $86.9\pm_{0.5}$           & $61.7\pm_{1.8}$           \\\hline
GroupDRO                  & $\textbf{82.6}\pm_{0.4}$           & $65.6\pm_{0.8}$           & $84.9\pm_{0.7}$           & $\textbf{61.9}\pm_{0.7}$      & $71.9\pm_{2.0}$           & $61.7\pm_{1.7}$           & $87.6\pm_{0.4}$           & $63.0\pm_{1.4}$           \\
IRM                       & $78.5\pm_{0.8}$           & $63.3\pm_{0.5}$           & $87.0\pm_{1.4}$           & $58.8\pm_{0.8}$      & $\textbf{75.4}\pm_{4.3}$           & $62.4\pm_{2.5}$           & $88.0\pm_{0.4}$           & $59.3\pm_{1.7}$           \\
V-REx                      & $78.3\pm_{0.3}$           & $\textbf{66.6}\pm_{0.9}$           & $87.4\pm_{0.6}$           & $59.2\pm_{1.5}$      & $73.9\pm_{0.7}$           & $\textbf{69.9}\pm_{2.0}$           & $\textbf{89.3}\pm_{0.7}$           & $60.2\pm_{2.0}$           \\

\bottomrule
\end{tabular}
\caption{Worst-group accuracy (\%) for subpopulation shifts.}
\label{tab:subpop}
\end{table}

\begin{table}[h]
\small
\centering
\setlength{\tabcolsep}{4.5pt}
\begin{tabular}{lccccccccc}
\toprule
\textbf{Dataset} & \multicolumn{2}{c}{\bfseries PACS} & \multicolumn{2}{c}{\bfseries VLCS} & \multicolumn{2}{c}{\bfseries OfficeHome} & \multicolumn{2}{c}{\bfseries TerraIncognita} \\\cmidrule(lr){1-1}  \cmidrule(lr){2-3} \cmidrule(lr){4-5} \cmidrule(lr){6-7} \cmidrule(lr){8-9} 
\textbf{Alg / $\eta$}        & \textbf{0.1}       & \textbf{0.25}     & \textbf{0.1}       & \textbf{0.25}    & \textbf{0.1}       & \textbf{0.25}   & \textbf{0.1}       & \textbf{0.25}          \\
\midrule
ERM                       & $82.0\pm_{0.5}$           & $74.5\pm_{0.6}$           & $75.0\pm_{0.3}$           & $\textbf{71.9}\pm_{0.6}$           & $62.2\pm_{0.1}$           & $54.9\pm_{0.3}$           & $52.1\pm_{0.5}$           & $48.5\pm_{0.3}$           \\
Mixup                     & $\textbf{83.6}\pm_{0.1}$           & $\textbf{75.2}\pm_{0.9}$           & $\textbf{75.5}\pm_{0.2}$           & $\textbf{71.9}\pm_{0.5}$           & $\textbf{63.9}\pm_{0.1}$           & $\textbf{57.5}\pm_{0.3}$           & $\textbf{53.2}\pm_{1.2}$           & $\textbf{51.3}\pm_{1.0}$           \\\hline
GroupDRO                  & $82.4\pm_{0.3}$           & $74.7\pm_{0.4}$           & $75.1\pm_{0.1}$           & $71.2\pm_{0.2}$           & $61.3\pm_{0.4}$           & $54.1\pm_{0.3}$           & $51.8\pm_{0.7}$           & $50.6\pm_{0.6}$           \\
IRM                       & $80.4\pm_{1.2}$           & $71.0\pm_{1.9}$           & $74.6\pm_{0.3}$           & $70.3\pm_{0.3}$           & $61.2\pm_{1.2}$           & $53.9\pm_{1.8}$           & $48.3\pm_{1.4}$           & $44.4\pm_{2.1}$           \\
VREx                      & $81.4\pm_{0.2}$           & $73.5\pm_{0.7}$           & $75.0\pm_{0.1}$           & $71.8\pm_{0.6}$           & $60.6\pm_{0.5}$           & $53.0\pm_{0.9}$           & $48.3\pm_{0.3}$           & $48.9\pm_{0.4}$           \\
\bottomrule
\end{tabular}
\caption{Cross-test accuracy (\%) for domain shifts.}
\label{tab:domain}
\end{table}

For real-world subpopulation-shift datasets in Table~\ref{tab:subpop}, the ranking of different algorithms varies across different datasets and noise levels. Despite being prone to spurious correlation and label noise, ERM has decent performance given the environments. 
The superiority of IL algorithms on CMNIST does not translate into real-world datasets.
For real-world domain-shift datasets, adding label noise degrades the performance of all tested algorithms on a similar scale. Surprisingly, Mixup outperforms all other algorithms consistently, followed closely by ERM. This indicates the effectiveness of data augmentation for OOD generalization under label noise. Despite being noise-robust, IRM and VREx rarely catch up with ERM. 
Thus, there is no clear evidence that the noise robustness property necessarily leads to better performance in real-world subpopulation-shift and domain-shift datasets. In practice, the choice of DG algorithms depends heavily on the dataset and hyperparameter search. An effective model selection strategy could be more important than an algorithm. 

For Waterbirds and CelebA, our GroupDRO results are much lower than the reproducible baseline \citep{Sagawa2019DistributionallyRN}, showing that the current hyperparameter search space can be far from complete. However, by utilizing the group labels, our results for ERM, IRM, and V-REx are already significantly higher than the reported scores from prior works \citep{Yao2022ImprovingOR}, setting \textit{new baseline records}. Given that ERM and DG algorithms perform similarly under the current setting, it may be possible that ERM can be improved further by better model selection and broader hyperparameter search. 

\section{Discussion} \label{sec:discussion}

% \textbf{Why does applying heavy $\ell_2$-regularization for standard ERM not work?} (refer to the table that has entries for $\ell_2$ regularization). First, SGD optimizer already has the effect of implicit regularization \citep{Zhang2016UnderstandingDL}. $\ell_2$-regularizer is fundamentally optimizing the same thing and our theoretical results by Theorem \ref{theorem:norm-spup} still hold. Heavier $\ell_2$-regularization may effectively prevent memorization of noisy samples and non-spuriously correlated samples, but this also puts less emphasis on the classification accuracy, causing the algorithm to prefer a simpler model using only spuriously correlated features. 

\textbf{Why does ERM objective perform competitively on real-world subpopulation-shift datasets?}
% (1) The noise robustness of IRM and V-REx may not be advantageous in the practical training regime real-world data. 
(1) The real-world training is usually coupled with model pretraining and data augmentation. With higher-quality representations and more augmented data, the learning requires much fewer training steps, which prevents noise memorization. We observe that even with strong spurious correlation $\gamma>0.95$, noise memorization is less severe for all algorithms including ERM on Waterbirds(+) dataset in Appendix~\ref{app:noise}.
% The memorization decreases in general as the dataset size increases. 
(2) The failure condition based on spurious correlation and label noise may not be satisfied in practice. Without knowing the group labels, the Waterbirds+ and CelebA+ datasets both exhibit strong spurious correlations between some input features and the labels. However, it is unclear how much more difficult it is to learn the invariant features compared to the spurious ones, which relates to the norm difference between the spurious and invariant classifiers. For example, in CelebA, the label hair blondness is spuriously correlated with the feature gender, but the gender feature could be more difficult to learn than the hair blondness, unlike CMNIST where the spurious feature color is more straightforward for the model to discover. Thus, the issue of spurious correlation on real-world datasets may be less severe.
(3) The invariance learning condition may not be satisfied. IL algorithms require training environments to have sufficient distributional differences for the subpopulations to distinguish invariant features. This condition might be true for CMNIST ($\gamma_1-\gamma_2=0.1$), but not necessarily for Waterbirds+ and CelebA+ with a less-than-2\% difference in $\gamma$.

\textbf{Does spurious correlation affect domain shifts?}
In domain shifts, it is common to assume there are domain-specific features that have spurious correlations with the labels. However, these spurious correlations picked up from the training environments are unlikely to be present in new domains. If this correlation is not so strong, we expect some level of meaningful representations can be learned by ERM and yield reasonable performance for test distributions. 
Moreover, when there are multiple training domains available that come with non-trivial spurious correlations, all the domain-specific spurious correlations combined may result in even more complexity and become a worse burden for the classifier to learn compared to the invariant features. 
We hypothesize that adding data from more domains is likely to encourage invariant feature learning even for just ERM.

\section{Conclusion and Future Work}
Label noise constitutes another failure mode for ERM by exacerbating the effect of spurious correlations. Invariant learning algorithms exhibit a clear advantage of noise robustness over ERM-based algorithms in certain synthetic circumstances. Unfortunately, such desirable property does not translate to better generalization in real-world situations. ERM objective with effective data augmentation still constructs a simple yet competitive baseline. We believe that more theoretical and empirical analyses are needed to understand when and why ERM and DG algorithms work and fail.

In practice, it is uncertain whether the invariant and spurious features can be perfectly extracted. Subsequently, the invariant features may violate the linear separability. As such, our theoretical analysis may not be directly applicable in the real world. A more general understanding can be obtained by removing these assumptions. Furthermore, though the spurious correlations may be non-trivial to learn for some datasets, certain types of adversarial attacks may create similar conditions as in corrupted CMNIST, where invariance learning might provide stronger resilience. The implicit noise robustness of IL algorithms could also be beneficial to learning with noisy labels. 

\section{Acknowledgement}
We would like to thank the anonymous reviewers and AC for the constructive and helpful feedback. We thank Zheyuan Hu for the helpful discussions. This research/project is supported by the National Research Foundation, Singapore under its AI Singapore Programme (AISG Award No: AISG$2$-PhD/$2021$-$08$-$017$[T]).

\section{Reproducibility Statement}
For our theoretical analysis, we have stated the assumptions in Section~\ref{sec:understanding} and given the proofs in Appendix~\ref{app:proofs}. For our empirical results, we have documented the complete experimental setup in Appendix~\ref{app:exp}. We have also submitted our code with the necessary instructions to reproduce our results.

%\newpage

\bibliography{iclr2024_conference}
\bibliographystyle{iclr2024_conference}

%%%%%%%%%%%%%%%%%%%%%%%%%%%%%%%%%%%%%%%%%%%%%%%%%%%%%%%%%%%%%%%%%%%%%%%%%%%%%%%
%%%%%%%%%%%%%%%%%%%%%%%%%%%%%%%%%%%%%%%%%%%%%%%%%%%%%%%%%%%%%%%%%%%%%%%%%%%%%%%
% APPENDIX
%%%%%%%%%%%%%%%%%%%%%%%%%%%%%%%%%%%%%%%%%%%%%%%%%%%%%%%%%%%%%%%%%%%%%%%%%%%%%%%
%%%%%%%%%%%%%%%%%%%%%%%%%%%%%%%%%%%%%%%%%%%%%%%%%%%%%%%%%%%%%%%%%%%%%%%%%%%%%%%
\newpage
\appendix
\onecolumn

\section{DG Algorithms}
In this work, we will analyze some representative algorithms including Invariance Learning (IL) \citep{arjovsky2019invariant, krueger2021out} and Distributionally Robust Optimization (DRO) \citep{Sagawa2019DistributionallyRN}. Let $f = w \circ \Phi$, where $w$ is linear and $\Phi$ is the features extracted from the input $x$. Let $\Ec_{tr}$ be the training environments. 
For IL, two popular risk objectives are: 
\begin{align*}
    \Rh^{\text{IRM}} &= \textstyle \sum_{e \in \Ec_{tr}} \Rh_e(w \circ \Phi) 
    ~~~~~~~~\text{s.t.}~ w \in \arg\min_{\Bar{w}} \Rh^e(\Bar{w}\circ\Phi),~ \forall e\in \Ec_{tr} \\
    \Rh^{\text{V-REx}}& = \textstyle \sum_{e \in \Ec_{tr}} \Rh_e(f) + \lambda \mathbb{V}_{e \in \Ec_{tr}}[\Rh_e(f)]
\end{align*}
where $\mathbb{V}[\Rh_e(f)]$ is the variance of risk across environments and $\lambda$ is the hyperparameter. 
For DRO, one of the risk objectives is the worst-case loss over all training environments: 
\begin{equation}
    \textstyle \Rh^{\text{Group-DRO}} = \max_{e \in \Ec_{tr}} \Rh_e(f).
\end{equation}

\section{Noise Robustness Analysis} \label{app:grad}

\subsection{Gradient Analysis for IRM Algorithm} \label{app:irm}
We show this by analyzing the gradient of IRM for finite samples instead of doing convergence analysis. Recall that the IRM objective function is:
\begin{align*}
    \Rc^{\text{IRM}} &= \sum_{e \in \Ec_{tr}} \mathcal{R}_e(w \circ \Phi) 
    ~~~~~~~~s.t.~ w \in \arg\min_{\Bar{w}} R^e(\Bar{w}\circ\Phi),~ \forall e\in \Ec_{tr} 
\end{align*}
The goal of IRM is to obtain classifiers that are simultaneously optimal across all training environments, but this induces a bilevel optimization problem that is hard to optimize. 
As zero gradient is necessary for reaching the local minimum, IRM has the practical surrogate that minimizes the norm of the gradient to encourage reaching the local minimum for all training environments. Since our goal is to understand how the DG algorithms work in practice, we only analyze the practical surrogate of IRM, which is the de facto implementation across standard benchmarks.
The practical surrogate called IRMv1 has the form:
\begin{equation} \label{eq:irmv1}
    \Rc^{\text{IRM}} = \sum_{e \in \Ec_{tr}} \mathcal{R}_e(w \circ \Phi_e) + \lambda \lVert \nabla_{w|w=1.0} \Rc_e(w \Phi_e) \rVert^2
\end{equation}
For each environment $e$:
\begin{equation}
    \mathcal{R}_e(w \circ \Phi_e) = \frac{1}{n} \sum_i^n \Rc_i(w\circ \Phi_i)
\end{equation}
The gradient w.r.t.~$w=1.0$ for an environment $e$ is a scalar that can be rewritten as:
\begin{align*}
    \nabla_{w|w=1.0} \Rc_e(w \Phi) &= (\frac{\partial \Rc_e}{\partial \Phi})  \Phi \\
    &= \frac{1}{n} \sum_i^n \frac{\partial \Rc_i}{\partial \Phi_i} \Phi_i
\end{align*}

For simplicity, we focus on IRMv1 and consider the binary classification case with logistic regression. The input $x \in \Rb^d$ and the label $y \in \{0,1\}$. We analyze with logistic regression due to its simple form and most of the analysis can be done with scalars. Moreover, it is widely known that logistic regression is equivalent to training with cross-entropy loss, so there is no loss of generality. Logistic regression has the form:
$$p(y=1|x) = \sigma(\theta^\top x),$$
where $\theta$ are the weights of the linear classifier, and $\sigma$ is the sigmoid function:
$$\sigma(x) = \frac{1}{1+e^{-x}}.$$
Let $\phi=\theta^\top x \in \Rb$, which corresponds to $\Phi$ in our previous formulations. The IRM-ready logistic regression is simply:
$$p(y=1|x) = \sigma(w \phi),$$ 
where $w=1$ and we will drop it in the following analysis unless it is needed in the term. The IRM-ready logistic loss for an environment:
 $$L(\theta) = - y \log \sigma(w\phi) - (1-y) \log (1-\sigma(w\phi)).$$
The partial gradient w.r.t. linear weights $\theta$:
 \begin{align*}
     \frac{\partial L(\theta)}{\partial \theta} = (\sigma(\phi)-y)x.
 \end{align*}
The partial gradient for $w$ used by IRMv1:
 \begin{align*}
     \nabla_{w|w=1}L(\theta) = (\sigma(\phi)-y)\phi.
 \end{align*}
Let $L_e$ be the logistic loss for environment $e$. The IRMv1 objective is:
$$L^\text{IRM} = \sum_e L_e(\theta) + \lambda \ltnormsq{\nabla_{w|w=1}L_e(\theta)}.$$
For simplicity, we just analyze the behavior of IRMv1 for a single environment, so we abuse the notation and the subscript $e$ can be dropped temporarily. The gradient for IRMv1 in a \textit{single} environment for one sample $(x,y)$ is:
\begin{align*}
    \frac{\partial L_e^\text{IRM}}{\partial \theta} &= \frac{\partial L_e(\theta)}{\partial \theta} + \frac{\partial \lambda \ltnormsq{\nabla_{w|w=1}L_e(\theta)}}{\partial \theta} \\
    &= (\sigma(\phi)-y)x + 2\lambda (\sigma(\phi)-y)\phi[\sigma(\phi)+\phi \sigma(\phi)(1-\sigma(\phi)) - y]x \\
    &= (\sigma(\phi)-y)x (1+2\lambda \phi[\sigma(\phi)+\phi \sigma(\phi)(1-\sigma(\phi)) - y]) \\
    &= \left(1+2\lambda \phi[\sigma(\phi)+\phi \sigma(\phi)(1-\sigma(\phi)) - y]\right) \frac{\partial L_e(\theta)}{\partial \theta} \\
    &= \alpha(\phi) \frac{\partial L_e(\theta)}{\partial \theta}
\end{align*}
We have re-expressed the gradient of IRM in terms of the gradient of the original logistic function multiplied by a new coefficient function $\alpha$. This new coefficient $\alpha(\phi)$ for the gradient has an interesting property that restricts the predictive probability to non-extreme values when the regularization strength $\lambda$ is sufficiently large. We plot the coefficient function $\alpha$ w.r.t. $\phi$ in Figure~\ref{fig:grad-irm}. The function is symmetric with respect to label $y$. Without loss of generality, we only look at the case for $y=1$. 
Recall that for logistic regression, we predict positive if $\phi>0$. To minimize the logistic loss for an instance ($x_i, y_i$) where $y_i=0$, $\phi$ would keep growing because it increases $p(y_i|x_i)$. The coefficient $\alpha$ needs to stay positive for gradient descent. However, as $\phi$ increases, $\alpha$ becomes negative before $\phi$ is large enough, which blocks the logistic loss to continue decreasing and severe memorization. Moreover, a larger $\lambda$ leads to a stronger resistance during the optimization process as shown in Figure~\ref{fig:grad-irm}. The same rationale applies to the $y=0$ case due to the symmetry. In practice, $\lambda$ (usually $>100$) tends to be very large in order to work well on datasets like CMNIST. Thus, for IRM, data points may be oscillating between gradient descent and gradient ascent (w.r.t. the ERM case), and we do observe that IRM loss typically does not converge on the best-performing models. We hypothesize that only classification patterns that work for most of the data could cross the hill and get preserved.

\begin{figure}
\centering
    \begin{subfigure}[b]{0.24\columnwidth}
    \centering	
    \includegraphics[width=\columnwidth]{figures/lambda10y1.pdf}
    \caption{$y=1, \lambda=10$}
    % \label{fig:lambda10y1}
    \end{subfigure}
    % \hspace{0.2in}
    \begin{subfigure}[b]{0.24\columnwidth}
    \centering	
    \includegraphics[width=\columnwidth]{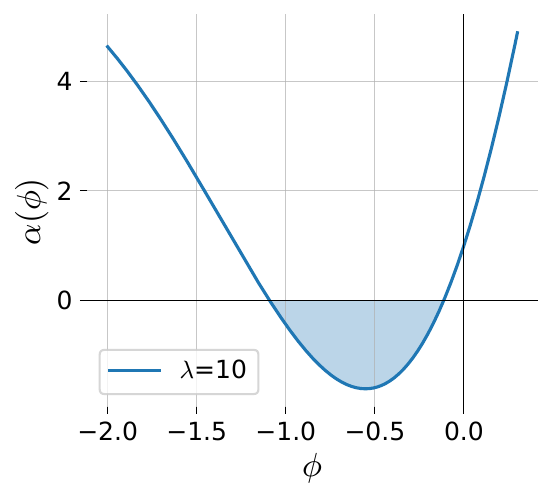}
    \caption{$y=0, \lambda=10$}
    % \label{fig:lambda10y0}
    \end{subfigure}
    \begin{subfigure}[b]{0.24\columnwidth}
    \centering	
    \includegraphics[width=\columnwidth]{figures/lambda100y1.pdf}
    \caption{$y=1$, varying $\lambda$}
    % \label{fig:lambda100y1}
    \end{subfigure}
    % \hfill
    \begin{subfigure}[b]{0.24\columnwidth}
    \centering	
    \includegraphics[width=\columnwidth]{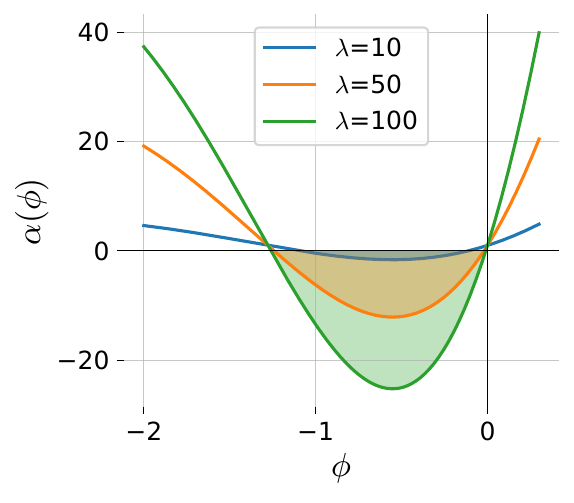}
    \caption{$y=0$, varying $\lambda$}
    % \label{fig:lambda100y0}
    \end{subfigure}
    % \hspace{0.2in}
    \caption{IRM gradient coefficient w.r.t. regularization strength $\lambda$}
    \label{fig:grad-irm}
\end{figure}

\subsection{Gradient Analysis for V-REx Algorithm} \label{app:vrex}
Recall that the V-REx loss function is:
\begin{align*}
    \Rc^{\text{V-REx}}& = \frac{1}{|\Ec_{tr}|} \sum_{e \in \Ec_{tr}} \mathcal{R}_e(f) + \lambda \mathbb{V}_{e \in \Ec_{tr}}[\Rc_e(f)] \\
    &= \frac{\lambda}{m}\sum_{e=1}^m(\Rc_e(f)-\Bar{\Rc})^2 + \frac{1}{m} \sum_{e=1}^m  \mathcal{R}_e(f)\ ,
\end{align*}
where $\Bar{\Rc} = (1/m)\sum_{e \in \Ec_{tr}}\Rc_e$ is the average loss of all training environments. For a specific environment $i\in \Ec_{tr}$, we can rewrite the loss as:
\begin{align*}
    \Rc^{\text{V-REx}}& = \frac{\lambda}{m}\sum_{e=1}^m(\Rc_e(f)-\Bar{\Rc})^2 + \frac{1}{m} \sum_{e=1}^m \mathcal{R}_e(f) \\
    &= \frac{\lambda}{m}(\Rc_i(f)-\Bar{\Rc})^2 +  \frac{\lambda}{m}\sum_{e=1, e\neq i}^m(\Rc_e(f)-\Bar{\Rc})^2  + \frac{1}{m} \Rc_i(f) + \frac{1}{m} \sum_{e=1, e\neq i} \mathcal{R}_e(f)\ .
\end{align*}
The gradient incurred from $\Rc_i$ can be derived as follows:
\begin{align*}
   \frac{\partial \Rc_i^{\text{V-REx}}}{\partial \theta} & = \frac{\partial \Rc^{\text{V-REx}}}{\partial \Rc_i} \frac{\partial \Rc_i}{\partial \theta} \\
   &= [\frac{2\lambda}{m}(\Rc_i(f)-\Bar{\Rc})(1-\frac{1}{m}) + \frac{2\lambda}{m}\sum_{e=1, e\neq i}^m(\Rc_e(f)-\Bar{\Rc})(-\frac{1}{m}) + \frac{1}{m} ] \frac{\partial \Rc_i}{\partial \theta} \\
   &= [\frac{2\lambda}{m}(\Rc_i(f)-\Bar{\Rc}) + \frac{2\lambda}{m}\sum_{e=1}^m(\Rc_e(f)-\Bar{\Rc})(-\frac{1}{m}) + \frac{1}{m} ]\frac{\partial \Rc_i}{\partial \theta} \\
   &= [\frac{2\lambda}{m}(\Rc_i(f)-\Bar{\Rc})+\frac{1}{m}] \frac{\partial \Rc_i}{\partial \theta} \\
   &= \frac{1}{m}[2\lambda (\Rc_i(f)-\Bar{\Rc})+1]\frac{\partial \Rc_i}{\partial \theta}\ .
\end{align*}
The overall gradient for the network parameters $\theta$ can be expressed as:
\begin{align*}
    \frac{\partial \Rc^{\text{V-REx}}}{\partial \theta} &= \sum_{e \in \Ec_{tr}} \frac{\partial \Rc_e^{\text{V-REx}}}{\partial \theta} \\
    &= \sum_{e \in \Ec_{tr}}\frac{1}{m}[2\lambda (\Rc_e(f)-\Bar{\Rc})+1]\frac{\partial \Rc_e}{\partial \theta}\ .
\end{align*}
In practice, the strength of regularization for invariance learning tends to be very large ($\lambda > 100$) for the better-performing models. For environments with below-average risk where $\Rc_e < \bar{\Rc}$, The coefficient will become negative, and gradient \emph{ascent} instead of descent is performed for this batch of data points. We can view such gradient ascent as an ``unlearning'' process. We hypothesize that since memorization is a gradual process and only decreases the risk for its own environment, making it below average, its effect can be purged later on by gradient ascent just as other spurious features. As a result, only the invariant features that are useful to all environments are preserved.

\section{Proofs} \label{app:proofs}

\subsection{Proof of Theorem \ref{theorem:norm-spu-main}} \label{proof:theorem-norm} 
We first look at the memorization cost for overfitting the noisy data points.
% lemma for noise memorization
\begin{lemma}\label{lemma:memorization}
For an overparameterized linear classification problem with label noise, with high probability, the squared norm of the classifier $\ltnormsq{\rvw}$ will be increased by $\beta^2 \dnui\sigmanui^2$ on average for memorizing a single sample (mislabeled or non-spuriously correlated) in the training set, where $\beta \geq 0$ is a constant. 
\end{lemma}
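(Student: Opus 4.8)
The plan is to exhibit the norm-minimal perturbation that interpolates a single noisy sample using only the nuisance coordinates, and then to evaluate its squared $\ell_2$-norm via Gaussian concentration. Fix a sample $i$ whose label $\yt^\ib$ cannot be reproduced from its invariant and spurious components (because it is mislabeled, or belongs to a minority group where the spurious feature points the wrong way). Keeping $\winv$ and $\wspu$ fixed, I would add to the nuisance block the rank-one update $\wnui \mapsto \wnui + \beta\,\xnui[i]$, which is exactly the minimum-norm nuisance-only direction achieving a prescribed functional margin $t$ on sample $i$, with coefficient $\beta = t/\ltnormsq{\xnui[i]}$. This update changes the logit of sample $i$ by $\beta\,\ltnormsq{\xnui[i]} = t$ and inflates the squared norm by precisely $\beta^2\,\ltnormsq{\xnui[i]}$.

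The first key step is to control $\ltnormsq{\xnui[i]} = \sum_{k=1}^{\dnui}(x^\ib_{\nui,k})^2$. Since $\xnui[i] \sim \mathcal{N}(\vzero_{\dnui},\, \sigmanui^2\,\rmI_{\dnui})$, this quantity is $\sigmanui^2$ times a $\chi^2_{\dnui}$ variable, so $\Eb[\ltnormsq{\xnui[i]}] = \dnui\,\sigmanui^2$; a standard chi-square tail bound (Laurent--Massart) gives $\ltnormsq{\xnui[i]} = \dnui\,\sigmanui^2\,(1 \pm o(1))$ with probability $1 - e^{-\Omega(\dnui)}$, and a union bound over the at most $n$ memorized samples is harmless because $\dnui \gg n$. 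This already yields the per-sample cost $\beta^2\,\dnui\,\sigmanui^2$ with high probability.

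The second step is to show that memorizing several samples at once is additive in norm and non-interfering in predictions, so that the stated cost is correct on average. Writing the aggregate update as $\sum_{i \in S}\beta\,\xnui[i]$ over the memorized set $S$, the total squared norm is $\beta^2\sum_{i \in S}\ltnormsq{\xnui[i]} + \beta^2\sum_{i \ne j}\langle \xnui[i], \xnui[j]\rangle$. Each off-diagonal term $\langle \xnui[i], \xnui[j]\rangle$ has mean $0$ and standard deviation $\sqrt{\dnui}\,\sigmanui^2$, hence is of order $\sqrt{\dnui}\,\sigmanui^2 \ll \dnui\,\sigmanui^2$; by the same near-orthogonality, the update for sample $i$ shifts the logit of sample $j$ only by the lower-order amount $\beta\,\langle \xnui[i], \xnui[j]\rangle$, leaving the other memorized labels intact. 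Dividing the total norm by $|S|$ then gives the average per-sample increment $\beta^2\,\dnui\,\sigmanui^2$.

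I expect the main obstacle to be the uniform control of these accumulated off-diagonal inner products: one must bound $\sum_{i\ne j}\langle \xnui[i], \xnui[j]\rangle$ and, simultaneously, every single cross term entering a logit, which calls for a Bernstein- or Hanson--Wright-type concentration inequality combined with a union bound over the $O(n^2)$ pairs, all exploiting $\dnui \gg n$. A secondary subtlety is justifying that the coefficient $\beta$ is common across memorized samples: since each needs the same margin $t$ and since $\ltnormsq{\xnui[i]}$ concentrates at the shared value $\dnui\,\sigmanui^2$, the single-constraint optimum $\beta = t/\ltnormsq{\xnui[i]} \approx t/(\dnui\,\sigmanui^2)$ is essentially the same for every $i$, so the per-sample cost is uniform. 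This in turn relies on the nuisance Gram matrix $\rmX_{\nui}\rmX_{\nui}^\top$ being close to $\dnui\,\sigmanui^2\,\rmI_{n}$, which also guarantees that the joint min-norm interpolant does not deviate appreciably from the superposition of the single-constraint solutions.
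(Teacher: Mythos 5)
Your proposal is correct and follows essentially the same route as the paper's proof: you construct the memorizing perturbation as $\sum_{i\in S}\beta\,\xnui[i]$ restricted to the nuisance block, invoke concentration of $\ltnormsq{\xnui[i]}$ around $\dnui\sigmanui^2$ (your Laurent--Massart chi-square bound plays the role of the paper's Bernstein inequality for subgaussian norms), and use near-orthogonality of the high-dimensional Gaussian nuisance vectors to make the squared norms add up to $\beta^2|S|\dnui\sigmanui^2$. Your explicit control of the off-diagonal inner products, the union bound over pairs, and the Gram-matrix argument for a common $\beta$ are more careful than the paper's "orthogonal with high probability" shortcut, but they formalize the same idea rather than departing from it.
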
 
\begin{proof}
First, by the definition of $\ell_2$-norm:
\begin{equation}
 \ltnormsq{\rvw} = \ltnormsq{\winv} + \ltnormsq{\wspu} + \ltnormsq{\wnui}\ .
\end{equation}
Since $\winv$ and $\wspu$ can both result in relatively low classification error, memorization will mostly incur a larger norm on the term $\ltnormsq{\wnui}$. 

Recall that $\xnui \sim \mathcal{N}(0, \sigma_{\nui}^2)$ and $\dnui \gg n$. Vectors sampled from high-dimensional Gaussian spheres are orthogonal to each other with high probability: $\forall i \neq j$, $\xnui^{(i)} \perp \xnui^{(j)}$. To memorize a set of data points $S$ using only the nuisance features, it is then sufficient to construct $\wnui^{(S)} = \sum_{i\in S} \beta y^{(i)}\xnui[i]$, 
% where $\alpha^\ib \rightarrow 0$ 
where $\beta \rightarrow 0$ and more importantly:
\begin{equation}
\lVert\wnui^{(S)}\rVert^2_2 \approx \beta^2 \sum_{i \in S} \lVert\xnui[i]\rVert^2_2\ .
\end{equation}

By Bernstein's inequality for the norm of subgaussian random variables: 
\begin{align*}
    P\left(\left| \lVert\xnui[i]\rVert^2 - \Eb[\lVert\xnui[i]\rVert^2] \right| \geq t\right) &\leq 2\exp{(-\frac{c \dnui \min(t^2, t)}{K^4})} \\
    P\left(\left|\lVert\xnui[i]\rVert^2 - \dnui\sigmanui^2 \right| \geq t\right) &\leq 2\exp{(-\frac{c \dnui \min(t^2, t)}{K^4})}\ ,
\end{align*}
where $K= \max_j \lonorm{\xnui[i]}_{\psi_2}$ and $c$ is an absolute constant.  
Thus, with high probability, $\lVert\wnui^{(S)}\rVert^2$ concentrates on $\beta^2 |S| \dnui\sigmanui^2 $, and the average cost of memorizing a data point is $\beta^2 \dnui\sigmanui^2 $. In particular, the probabilistic bound becomes tighter as $\dnui$ increases, which corresponds to the model becoming more overparameterized. 
\end{proof}

% Lemma for purely spurious and purely invariant classifiers
We first compare the purely spurious classifier $\rvw^{(\spu)}$ and the purely invariant classifier $\rvw^{(\inv)}$ under the effect of noise memorization. 
Let $[\rva;\rvb]$ be the concatenation of vector $\rva$ and $\rvb$ vertically. Let $L$ be the logistic loss. The classifiers can be defined mathematically as: 
\begin{align*}
    \rvw^{(\spu)} = \arg\min_{[\wspu; \wnui]} \frac{1}{N} \sum_i L(y^{(i)}, [\wspu; \wnui]^\top [\xspu^{(i)};\xnui^{(i)}])\ .
\end{align*}
\begin{align*}
    \rvw^{(\inv)} = \arg\min_{[\winv; \wnui]} \frac{1}{N} \sum_i L(y^{(i)}, [\winv; \wnui]^\top [\xinv^{(i)};\xnui^{(i)}])\ .
\end{align*}

\begin{theorem} \label{theorem:norm-spu}
     If $\ltnormsq{\rvw^{(\inv)}_{\inv}} - \ltnormsq{\rvw^{(\spu)}_{\spu}} \geq n(1-\gamma)(1-2\eta)\beta^2\dnui\sigmanui^2$, then $\ltnorm{\rvw^{(\inv)}} \geq \ltnorm{\rvw^{(\spu)}}$.
\end{theorem}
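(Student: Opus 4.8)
The plan is to compare the two classifiers by splitting each squared norm into a ``signal'' contribution from the useful-feature block and a ``memorization'' contribution from the nuisance block, and then to invoke Lemma~\ref{lemma:memorization} to express the memorization cost in terms of the number of points each classifier must fit through its nuisance coordinates. Because the nuisance coordinates are orthogonal to the invariant and spurious blocks and (with high probability) to one another, the optimal norm decomposes additively, so that
\begin{align*}
\ltnormsq{\rvw^{(\spu)}} &= \ltnormsq{\rvw^{(\spu)}_{\spu}} + \ltnormsq{\rvw^{(\spu)}_{\nui}}, & \ltnormsq{\rvw^{(\inv)}} &= \ltnormsq{\rvw^{(\inv)}_{\inv}} + \ltnormsq{\rvw^{(\inv)}_{\nui}},
\end{align*}
and it remains to evaluate the two nuisance terms. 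Since the signal terms $\ltnormsq{\rvw^{(\spu)}_{\spu}}$ and $\ltnormsq{\rvw^{(\inv)}_{\inv}}$ are the fixed quantities appearing in the hypothesis, everything reduces to counting memorization sets.

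Next I would count, for each classifier, how many training points its useful feature fails to classify under the \emph{noisy} labels and must therefore absorb into the nuisance block. For the invariant classifier this is immediate: by Assumption~\ref{assump:separable} the invariant features perfectly separate the noise-free data, so $\rvw^{(\inv)}$ only fails on the points whose label was flipped, a fraction $\eta$, giving a memorization set of expected size $n\eta$. For the spurious classifier the count is more delicate: orienting $\xspu$ along the majority correlation, a point is classified correctly exactly when it lies in the majority group and is \emph{not} flipped (probability $\gamma(1-\eta)$) or lies in the minority group and \emph{is} flipped (probability $(1-\gamma)\eta$); one should also check this orientation is the norm-minimizing one, which holds because $\gamma(1-\eta)+(1-\gamma)\eta \ge \tfrac12$ for $\gamma \in (0.5,1)$, $\eta \in [0,0.5]$. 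Hence $\rvw^{(\spu)}$ must memorize a fraction $1 - \gamma(1-\eta) - (1-\gamma)\eta = (1-\gamma)(1-\eta) + \gamma\eta$ of the data, an expected count of $n\big[(1-\gamma)(1-\eta)+\gamma\eta\big]$.

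Then, applying Lemma~\ref{lemma:memorization} (each memorized point contributes $\beta^2 \dnui \sigmanui^2$, concentrating with high probability as $\dnui$ grows), I would substitute the two counts and reduce the desired inequality $\ltnormsq{\rvw^{(\inv)}} \ge \ltnormsq{\rvw^{(\spu)}}$ to
\begin{equation*}
\ltnormsq{\rvw^{(\inv)}_{\inv}} - \ltnormsq{\rvw^{(\spu)}_{\spu}} \ge n\big[(1-\gamma)(1-\eta)+\gamma\eta - \eta\big]\beta^2\dnui\sigmanui^2,
\end{equation*}
and the final algebraic step is the identity $(1-\gamma)(1-\eta)+\gamma\eta - \eta = 1-\gamma-2\eta+2\gamma\eta = (1-\gamma)(1-2\eta)$, which recovers exactly the stated threshold. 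The main obstacle is the counting step for $\rvw^{(\spu)}$: pinning down its memorization set requires correctly combining the spurious-correlation structure with the independent label flips and arguing that the min-norm solution adopts the majority orientation. The additive norm decomposition and the concentration of the per-point cost must likewise be justified rather than assumed, but both follow directly from the orthogonality of the high-dimensional nuisance block together with Lemma~\ref{lemma:memorization}.
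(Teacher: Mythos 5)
Your proposal is correct and follows essentially the same route as the paper's proof: decompose each squared norm into a signal block plus a nuisance/memorization block, count the memorization sets as $n\eta$ for the invariant classifier and $n[(1-\gamma)(1-\eta)+\gamma\eta] = n(1-\gamma+(2\gamma-1)\eta)$ for the spurious one, apply Lemma~\ref{lemma:memorization} for the per-point cost $\beta^2\dnui\sigmanui^2$, and simplify the difference of counts to $(1-\gamma)(1-2\eta)$. Your added checks (that the majority orientation is the norm-minimizing one, and that the additive decomposition rests on the orthogonality of the high-dimensional nuisance block) are sound and, if anything, make the argument slightly more careful than the paper's version.
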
 
\begin{proof}
    With label noise level $\eta$, the number of data points that need to be memorized using the optimal set of invariant features is:
    \begin{equation*}
        \Tilde{n}_{\inv} = n\eta \ .
    \end{equation*}
    The number of data points that need to be memorized using the $\gamma$-correlated spurious feature is:
    \begin{equation*}
        \Tilde{n}_{\spu} = n(1-\gamma+(2\gamma-1)\eta)\ .
    \end{equation*}
    Then approximately, 
    \begin{align*}
        \ltnormsq{\rvw^{(\inv)}} &= \ltnormsq{\rvw^{(\inv)}_{\inv}}+ \ltnormsq{\rvw^{(\inv)}_{\nui}} \\ 
        &\approx \ltnormsq{\rvw^{(\inv)}_{\inv}} + n\eta\beta^2\dnui\sigmanui^2\\
        \ltnormsq{\rvw^{(\spu)}} &= \ltnormsq{\rvw^{(\spu)}_{\spu}}+ \ltnormsq{\rvw^{(\spu)}_{\nui}} \\ 
        &\approx \ltnormsq{\rvw^{(\spu)}_{\spu}} + n(1-\gamma+(2\gamma-1)\eta)\beta^2\dnui\sigmanui^2\ .
    \end{align*}
Then rewrite the gap as:
\begin{align*}
% \ltnormsq{[\winv, \wnui]} - \ltnormsq{[\wspu, \wnui]} &= 
        \ltnormsq{\rvw^{(\inv)}} - \ltnormsq{\rvw^{(\spu)}} &= 
         \ltnormsq{\rvw^{(\inv)}_{\inv}} + n\eta\beta^2\dnui\sigmanui^2 - (\ltnormsq{\rvw^{(\spu)}_{\spu}} + n(1-\gamma+(2\gamma-1)\eta)\beta^2\dnui\sigmanui^2) \\  
        &= \ltnormsq{\rvw^{(\inv)}_{\inv}} - \ltnormsq{\rvw^{(\spu)}_{\spu}} + n\eta\beta^2\dnui\sigmanui^2 - n(1-\gamma+(2\gamma-1)\eta)\beta^2\dnui\sigmanui^2 \\
        &= \ltnormsq{\rvw^{(\inv)}_{\inv}} - \ltnormsq{\rvw^{(\spu)}_{\spu}} -n(1-\gamma+(2\gamma-2)\eta)\beta^2\dnui\sigmanui^2  \\
        &= \ltnormsq{\rvw^{(\inv)}_{\inv}} - \ltnormsq{\rvw^{(\spu)}_{\spu}} -n(\gamma-1)(2\eta-1)\beta^2\dnui\sigmanui^2 \\
        &= \ltnormsq{\rvw^{(\inv)}_{\inv}} - \ltnormsq{\rvw^{(\spu)}_{\spu}} - n(1-\gamma)(1-2\eta)\beta^2\dnui\sigmanui^2 \ .
    \end{align*}
    Suppose that $\ltnorm{\rvw^{(\inv)}} \geq \ltnorm{\rvw^{(\spu)}}$, then:
    \begin{align*}
         \ltnormsq{\rvw^{(\inv)}_{\inv}} - \ltnormsq{\rvw^{(\spu)}_{\spu}} - n(1-\gamma)(1-2\eta)\beta^2\dnui\sigmanui^2  &\geq 0 \\
         \ltnormsq{\rvw^{(\inv)}_{\inv}} - \ltnormsq{\rvw^{(\spu)}_{\spu}}  &\geq n(1-\gamma)(1-2\eta)\beta^2\dnui\sigmanui^2 \ .
    \end{align*}
    By the above equation, it is easy to see that larger $\gamma$ or $\eta$ both lead to a smaller difference required between $\ltnormsq{\rvw^{(\inv)}_{\inv}}$ and $\ltnormsq{\rvw^{(\spu)}_{\spu}}$ to have $\ltnormsq{\rvw^{(\inv)}} \geq \ltnormsq{\rvw^{(\spu)}}$. As our classifier is implicitly regularized to prefer a decision boundary with a smaller norm, the converged solution becomes more likely to rely on spurious correlation, resulting in significantly worse generalization performance. 
\end{proof}
\begin{remark}
    It is possible that the classifier that only uses the spurious features may achieve a smaller norm by replacing the memorization of certain data points with invariant feature learning. This is likely to be true in practice. However, our objective here is to identify at least one suboptimal candidate classifier that is preferred by ERM to the invariant classifier. Moreover, the classifier with both spurious and partially invariant features will have an even smaller norm than $\ltnormsq{\rvw^{(\spu)}}$, resulting in the model preferring this model more than the purely spurious classifier and thus also not converging to the purely invariant classifier. 
\end{remark}

\subsection{The Inductive Bias towards Minimum-Norm Solution} \label{app:minnorm}
The weight norm is usually associated with the complexity of the model. Techniques such as $\ell_2$-regularization are often used to improve the generalization of linear models by introducing the bias towards smaller norm solutions. Moreover, \citet{zhang2021understanding} hypothesized the implicit regularization effect of stochastic gradient descent (SGD) during model training. They show the result for linear regression: 
\begin{lemma}
    For linear regression, the model parameters optimized using gradient descent w.r.t. mean squared error converges to the minimum $\ell_2$-norm solution:
    $$\rvw^* = \arg\min_{\rvw} \ltnormsq{\rvw}~~~\text{s.t.}~y=X\rvw\ .$$
\end{lemma}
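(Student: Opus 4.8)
The plan is to exploit the fact that gradient descent on the quadratic least-squares objective can only ever move the weight vector within the row space of $X$. If the iteration is initialized at the origin, the iterates are therefore confined to that subspace for all time; and since the minimum-$\ell_2$-norm interpolant is precisely the \emph{unique} interpolating solution lying in that subspace, this confinement together with a convergence-to-interpolation argument pins down the limit exactly.

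Concretely, I would first write the mean-squared-error objective as $L(\rvw) = \tfrac{1}{2}\ltnormsq{X\rvw - y}$ and record the gradient-descent recursion $\rvw_{t+1} = \rvw_t - \alpha\, X^\top (X\rvw_t - y)$ with step size $\alpha$. The first key step is the observation that every gradient $X^\top(X\rvw_t - y)$ lies in $\operatorname{range}(X^\top) = (\ker X)^\perp$, the row space of $X$. Hence, starting from $\rvw_0 = \vzero$, a one-line induction shows $\rvw_t \in \operatorname{range}(X^\top)$ for every $t$, and this membership passes to the limit because the subspace is closed. This invariance is the conceptual heart of the argument.

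The second step is a standard convergence analysis for a convex quadratic: diagonalizing the Hessian $X^\top X$ and taking $0 < \alpha < 2/\lambda_{\max}(X^\top X)$, the iterates converge to a stationary point $\rvw^\infty$ with $X^\top(X\rvw^\infty - y) = \vzero$. In the overparameterized, consistent regime ($d \gg n$, with $y = X\rvw$ solvable), this normal equation forces $X\rvw^\infty = y$, so the limit interpolates the data. Finally I would invoke the linear-algebra characterization of the minimum-norm solution: the solution set $\{\rvw : X\rvw = y\}$ is an affine translate of $\ker X$, whose unique element of least $\ell_2$-norm is the one orthogonal to $\ker X$, i.e.\ the unique interpolant in $\operatorname{range}(X^\top)$. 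Since $\rvw^\infty$ both interpolates and lies in $\operatorname{range}(X^\top)$, it must coincide with this minimum-norm solution.

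I expect the main technical care, rather than any deep obstacle, to reside in the convergence step that identifies the limit as a genuine interpolant: one must confirm the step-size/eigenvalue bookkeeping guarantees convergence to a global minimizer and that, under consistency, this minimizer satisfies $X\rvw^\infty = y$. A secondary subtlety is rank deficiency of $X$, where consistency of $y = X\rvw$ must be assumed explicitly; reassuringly, the row-space/orthogonality reasoning never uses full rank, only that the gradient directions span exactly $\operatorname{range}(X^\top)$, so the argument carries through unchanged.
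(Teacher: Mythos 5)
Your proof is correct and complete: the row-space invariance of the iterates from zero initialization, convergence to an interpolant under the step-size condition, and the characterization of the minimum-norm solution as the unique interpolant in $\operatorname{range}(X^\top)$ together pin down the limit exactly. Note that the paper does not actually prove this lemma --- it simply cites it from \citet{zhang2021understanding} --- and your argument is the standard proof of that cited result; your explicit attention to the zero-initialization hypothesis (which the lemma statement omits but which is genuinely needed) is a point in your favor.
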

% The proof relies on model optimized using gradient descent always lies in the span of the training data when $\rvw$ is initialized as $0$. This induces the kernel trick. 

For classification, we follow a similar rationale proposed by \citep{Sagawa2020AnIO}. Recall that as we have assumed overparameterization, the training data is linearly separable. Consider the hard-margin SVM: 
$$\rvw^* = \arg\min_{\rvw} \ltnormsq{\rvw}~~~\text{s.t.}~y^{(i)}(\rvw^\top \rvx^{(i)}) \geq 1~~\forall i\ .$$
In this case, the max-margin solution corresponds to the minimum-norm solution. Thus, it suffices to show if the model converges to the max-margin solution. \citet{soudry2018implicit} and \citet{ji2018risk} demonstrated that logistic regression trained with gradient descent converges in direction towards the max-margin solution: 
\begin{theorem}[\citet{soudry2018implicit}]
    For any linearly separable data, gradient descent for logistic regression will behave as:
    \begin{equation}
        \rvw(t) = \rvw^* \log t + \bm{\rho}(t)\ ,
    \end{equation}
    where $t$ is the step. The residual grows at most as $\ltnorm{\bm{\rho}(t)}=O(\log \log t)$, and so it converges towards the direction of the max-margin solution $\rvw^*$:
    $$\lim_{t \rightarrow \infty}\frac{\rvw(t)}{\ltnorm{\rvw(t)}}=\frac{\rvw^*}{\ltnorm{\rvw^*}}\ .$$
\end{theorem}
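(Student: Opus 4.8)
The plan is to read off the implicit bias from the asymptotic dynamics of gradient descent on the logistic loss, using the exponential tail of the loss to show that the growth direction is eventually controlled by the max-margin support vectors. Throughout I adopt the $\pm 1$ label convention of the preceding hard-margin SVM formulation, writing $L(\rvw) = \sum_i \ell(y_i \rvw^\top \rvx_i)$ with $\ell(u) = \log(1+e^{-u})$ and $y_i \in \{-1,+1\}$, so that $\rvw^*$ denotes the hard-margin solution $\arg\min \ltnormsq{\rvw}$ subject to $y_i \rvw^\top \rvx_i \geq 1$.

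First I would record the coarse behavior on separable data. Since $\ell$ is strictly decreasing with no finite minimizer and the data are separable, $L$ likewise has no finite minimizer; gradient descent with a sufficiently small step size (the logistic loss is smooth with Lipschitz gradient for bounded inputs) then decreases $L$ monotonically, forces $L(\rvw(t)) \to 0$, and hence drives $\ltnorm{\rvw(t)} \to \infty$. So the weights escape to infinity while the loss vanishes, and the only meaningful object is the direction $\rvw(t)/\ltnorm{\rvw(t)}$. Next I would isolate the dominant part of the gradient: $-\nabla L(\rvw) = \sum_i \big({-\ell'(y_i\rvw^\top\rvx_i)}\big)\, y_i\rvx_i$, and because $-\ell'(u) = 1/(1+e^u) = e^{-u}(1+o(1))$, each sample is weighted exponentially by its negative margin. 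As $\ltnorm{\rvw(t)}$ grows, the minimal-margin points dominate exponentially, and these are exactly the support vectors of $\rvw^*$. By the KKT conditions, $\rvw^* = \sum_i \alpha_i y_i\rvx_i$ with $\alpha_i > 0$ only on support vectors, so the asymptotic negative-gradient cone, spanned by $\{y_i\rvx_i : i \text{ a support vector}\}$, contains $\rvw^*$.

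The core step is to verify the claimed form by substituting the ansatz $\rvw(t) = \rvw^*\log t + \bm{\rho}(t)$. For a support vector, $y_i\rvw^{*\top}\rvx_i = 1$, so $e^{-y_i\rvw(t)^\top\rvx_i} = t^{-1}e^{-y_i\bm{\rho}(t)^\top\rvx_i}$, whereas a non-support vector has margin exceeding $\log t$ by a factor bounded away from $1$ and thus contributes $o(t^{-1})$. Passing to the gradient flow $\dot{\rvw} = -\nabla L$, the ansatz gives $\dot{\rvw} = t^{-1}\rvw^* + \dot{\bm{\rho}}$, while the negative gradient is $\approx t^{-1}\sum_{i\in \mathrm{SV}} e^{-y_i\bm{\rho}(t)^\top\rvx_i}\, y_i\rvx_i$. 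Matching the $t^{-1}$ coefficients requires $\bm{\rho}(t)$ to settle at a $\tilde{\bm{\rho}}$ for which $e^{-y_i\tilde{\bm{\rho}}^\top\rvx_i}$ reproduces the dual coefficients $\alpha_i$ on the support set, while $\dot{\bm{\rho}} = o(t^{-1})$ forces $\bm{\rho}$ to grow slower than $\log t$. Dividing by $\ltnorm{\rvw(t)} = \log t\,\ltnorm{\rvw^*} + O(\ltnorm{\bm{\rho}(t)})$ and invoking $\ltnorm{\bm{\rho}(t)} = O(\log\log t) = o(\log t)$ then yields the stated directional limit.

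The hard part will be making the residual control rigorous rather than heuristic. This needs three ingredients: (i) a uniform separation of support from non-support margins, so the non-support contributions are summable and perturb $\bm{\rho}(t)$ only by a bounded amount; (ii) a stability argument for the residual recursion, either a Lyapunov function or passage to the gradient-flow ODE with a discretization-error bound under the step-size condition, showing $\bm{\rho}(t)$ does not drift along the $\rvw^*$ direction; and (iii) tracking the subleading discrepancy between $\ell(u)$ and its leading exponential $e^{-u}$, together with the residual non-support terms, which is precisely what produces the slow $O(\log\log t)$ growth instead of outright boundedness. Once these estimates are in place the remainder is bookkeeping; this is the content established in \citet{soudry2018implicit}.
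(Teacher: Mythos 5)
The paper does not prove this statement at all: it is quoted verbatim from \citet{soudry2018implicit} as an imported result, and the surrounding text in the appendix only uses it (together with the Rosset--Zhu--Hastie limit) to justify the min-norm inductive bias invoked in Theorem~\ref{theorem:norm-spu-main}. So there is no in-paper proof to compare against. Judged on its own, your sketch faithfully reconstructs the actual argument of the cited reference: divergence of $\ltnorm{\rvw(t)}$ with vanishing loss on separable data, exponential weighting of samples by negative margin so that the support vectors dominate the gradient, the ansatz $\rvw(t)=\rvw^*\log t+\bm{\rho}(t)$ with the $t^{-1}$ matching that forces $e^{-y_i\tilde{\bm{\rho}}^\top\rvx_i}$ to equal the KKT dual coefficients, and $o(t^{-1})$ contributions from non-support points. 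The one substantive caveat is that everything doing quantitative work --- the stability of the residual recursion for discrete-time GD rather than gradient flow, and the bound $\ltnorm{\bm{\rho}(t)}=O(\log\log t)$ (which in the reference is only needed in degenerate support-vector configurations; otherwise $\bm{\rho}$ is bounded) --- is explicitly deferred back to the citation, so as a standalone proof your write-up is an outline rather than a complete argument. You also inherit, without flagging it, the theorem's overly generous hypotheses as stated here: the precise asymptotic form requires step-size conditions and (for the clean characterization of $\tilde{\bm{\rho}}$) a non-degeneracy assumption on the support vectors, neither of which appears in the statement. None of this is a wrong step; it is the right route, carried as far as one reasonably can without reproving the technical lemmas of the original paper.
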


However, the global minimum of unregularized logistic regression is not well-defined. To bridge this gap, we can look at $\ell_2$-regularized logistic regression with regularization hyperparameter $\lambda$:
\begin{equation}
    \rvw^*_{\lambda} = \arg\min_{\rvw} \frac{1}{N} \sum_i L(y^{(i)}, \rvw^\top \rvx^{(i)}) + \frac{\lambda}{2}\ltnormsq{\rvw}\ ,
\end{equation}
where $L$ is the logistic loss. As $\lambda \rightarrow 0^+$, we can recover the unregularized logistic regression solution. Moreover, it also reflects the max-margin/min-norm solution, which has been proved by \citet{rosset2003margin} in their Theorem 2.1:
\begin{align*}
    \lim_{\lambda \rightarrow 0^+} \frac{\rvw^*_{\lambda}}{\ltnorm{\rvw^*_{\lambda}}} = \frac{\rvw^*}{\ltnorm{\rvw^*}}.
\end{align*}
Thus, we can mimic unregularized logistic regression with $\lambda$ that is extremely small, so that the norm is \textit{finite}. The linear model also converges in a finite amount of time when it is trained in practice.

These results demonstrate the minimum-norm inductive bias of (stochastic) gradient descent. This allows us to derive conclusions based on the comparison of norms between different types of classifiers such as the only using invariant features or spurious features. 
\color{black}
\subsection{Standard ERM still works in expectation}
\begin{proposition} \label{prop:first-order}
    For a binary classification problem using softmax classifiers trained with cross-entropy loss function on infinite samples, if the label-independent noise level $\eta < 1/2$, there exists a $f^*$ that is Bayes optimal and also a global optimum that satisfies first-order stationarity. 
\end{proposition}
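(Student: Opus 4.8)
The plan is to exhibit an explicit $f^*$ — the classifier whose softmax output equals the \emph{noisy} posterior $p(\tilde y = 1 \mid x)$ — and to verify the three required properties in turn, leaning on the fact that cross-entropy is a strictly proper scoring rule. First I would record how label-independent noise reshapes the posterior: flipping each label with probability $\eta$ gives
\begin{equation*}
    p(\tilde y = 1 \mid x) = (1-\eta)\, p(y=1\mid x) + \eta\, p(y=0 \mid x) = \eta + (1-2\eta)\, p(y=1 \mid x)\ ,
\end{equation*}
so the noisy posterior is an affine, strictly increasing function of the clean one whenever $\eta < 1/2$.

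Next I would establish global optimality and stationarity. Because the population risk $\mathcal{R}^\eta(f) = \Eb_{(x,\tilde y)\sim \Pt}[\ell(f(x),\tilde y)]$ is an expectation over $x$ of the conditional cross-entropy, it suffices to minimize pointwise: for fixed $x$, writing $\tilde p = p(\tilde y =1 \mid x)$ and letting the softmax output be $q \in (0,1)$, the conditional objective $-\tilde p \log q - (1-\tilde p)\log(1-q)$ is strictly convex in $q$ with derivative vanishing exactly at $q = \tilde p$. Hence $f^*(x) = p(\tilde y=1\mid x)$ is the unique global minimizer over all predictors. Under the overparameterized (expressive) softmax/linear model, some parameter $\theta^*$ realizes $f^*$; since $\mathcal{R}^\eta$ is convex and differentiable in $\theta$ and $\theta^*$ is a global minimizer, $\nabla_\theta \mathcal{R}^\eta(f_{\theta^*}) = \vzero$, giving first-order stationarity. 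The noise $\eta > 0$ keeps $\tilde p$ bounded inside $[\eta, 1-\eta]$, so the optimal logit stays finite and the stationary point is genuine rather than pushed off to infinity.

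It then remains to show $f^*$ is Bayes optimal for the clean, noise-free distribution under $0$--$1$ loss. The clean Bayes rule predicts class $1$ iff $p(y=1\mid x) > 1/2$. Using the affine relation above, $f^*(x) > 1/2$ is equivalent to $(1-2\eta)\,p(y=1\mid x) > 1/2 - \eta = (1-2\eta)/2$, i.e.\ $p(y=1\mid x) > 1/2$, where dividing through by $1-2\eta > 0$ is valid precisely because $\eta < 1/2$. Thus thresholding $f^*$ at $1/2$ reproduces the clean Bayes decision boundary, so the hard classifier induced by $f^*$ coincides with the Bayes-optimal one.

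The hard part will be the conceptual bridge in the last step rather than any heavy computation: a global optimum of the \emph{noisy} cross-entropy risk matches the noisy posterior (a calibration statement), whereas Bayes optimality is a statement about the \emph{clean} $0$--$1$ risk, and these only align because the monotone affine transform induced by symmetric noise with $\eta < 1/2$ preserves the $1/2$-threshold crossing. I would also be careful to state explicitly the expressiveness/realizability assumption needed for a softmax classifier to attain $f^*$ (otherwise the claim is read over the class of all measurable predictors, with stationarity understood in function space), and to confirm that the strict inequality $\eta < 1/2$ is exactly what prevents degeneracy — at $\eta = 1/2$ the map collapses to $\tilde p \equiv 1/2$ and all predictive information is destroyed, matching the intuition already noted around Theorem~\ref{theorem:norm-spu-main}.
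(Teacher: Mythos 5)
Your proposal is correct and takes essentially the same route as the paper: both arguments identify the global optimum as the predictor matching the noisy posterior (which, in the paper's deterministic-label setting, means $f(x)_1 = 1-\eta$ for $y=1$), obtained by setting the $(1-\eta)/\eta$-weighted expected gradient --- equivalently, the derivative of the strictly convex conditional cross-entropy --- to zero. Your explicit threshold argument for why calibrating to the noisy posterior still reproduces the clean Bayes decision when $\eta < 1/2$ fills in a step the paper only asserts, and your realizability caveat corresponds to the paper's unconstrained treatment of the logit gradients $\nabla h_0, \nabla h_1$.
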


\begin{proof}
Recall that ($x,\Tilde{y}$) denotes a data point that is subject to label noise with ratio $\eta$. Since the original label for $x$ is $y$, then we can use $1-y$ to represent the flipped label since its binary classification with labels $0,1$. For notational convenience, we can let $q$ be the one-hot vector for $y$. It has the same dimension 2 as $f(x)$. Let $h$ be the logits for input $x$ in the final layer before passing through softmax, meaning that $f(x) = \softmax(h)$. The gradient of the model parameters for correctly labeled data points ($\Tilde{y} = y$):
\begin{align*}
    \nabla \ell^+ &= \nabla \ell(f(x), y) \\
    &=  - \nabla \sum_i q_i \log f(x)_i  \tag{cross-entropy loss} \\
    &=  - \nabla \log f(x)_y  \tag{$q_y = 1$} \\
    &=-\frac{1}{f(x)_y}\nabla f(x)_y \\
    &=-\frac{1}{f(x)_y}\nabla \softmax(h)_y \\
    &=-\frac{1}{f(x)_y} [f(x)_y(q - f(x))] ^\top \nabla h \\
    &= (f(x) - q)^\top \nabla h
\end{align*}
The gradient for mislabeled data points ($\Tilde{y} \neq y$):
\begin{align*}
    \nabla \ell^- &= \nabla \ell(f(x), 1-y) \\
    &= -\frac{1}{f(x)_{1-y}}\nabla f(x)_{1-y} \\
    &= (f(x) - (1-q))^\top \nabla h
\end{align*}

We can consider the two logits $h_0$, $h_1$ separately. Without loss of generality, consider $y=1$: 
\begin{align*}
    \nabla \ell^+ &= (f(x) - q)^\top \nabla h\\
    &=(f(x)_1 -1) \nabla h_1 + (f(x)_0-0) \nabla h_0 \\
    \nabla \ell^- &= (f(x) - (1-q))^\top \nabla h\\
    &=(f(x)_1 -0) \nabla h_1 + (f(x)_0-1) \nabla h_0
\end{align*}
With noise level $\eta$, in expectation:
\begin{align*}
    \Eb[\nabla \ell] &= (1-\eta)\nabla \ell^+ + \eta \nabla \ell^- \\
    &= (1-\eta) [(f(x)_1 -1) \nabla h_1 + (f(x)_0-0) \nabla h_0] + \eta [(f(x)_1 -0) \nabla h_1 + (f(x)_0-1) \nabla h_0] \\
    &= (f(x)_1 - 1 + \eta)\nabla h_1 + (f(x)_0 - \eta) \nabla h_0\\
    &= (f(x)_1 - 1 + \eta)\nabla h_1 + (1 - f(x)_1 - \eta) \nabla h_0 \tag{$f(x)_0+f(x)_1 = 1$}
\end{align*}
Clearly, $f(x)_1 = 1-\eta$ is the unique solution to setting the expected gradient as 0, if we do not constrain the value of $\nabla h_1, \nabla h_0$, reaching the first-order stationary point with convergence. Moreover, this is also a minimizer and the Bayes optimal classifier $f^*$ as it achieves the Bayes optimal error rate $\eta$. 
\end{proof}

Fundamentally, the label noise level $\eta$ is essentially the same as Bayes error rate in expectation. This shows that logistic regression classifiers trained with ERM are naturally noise-robust \textit{in expectation}.

\begin{proposition} \label{prop:gamma-corr}
Given the noise level $\eta < 1/2$, the Bayes optimal classifier remains at least as accurate as the classifier using only $\gamma$-correlated features, i.e., $R^{\eta}_{\zeroone}(f^*) \leq R^{\eta}_{\zeroone}(f^{\gamma})$. 
\end{proposition}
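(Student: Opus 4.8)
The plan is to evaluate both $0$-$1$ risks on the noisy distribution $\Pt$ directly and compare them. Two ingredients suffice: from Proposition~\ref{prop:first-order}, the Bayes-optimal classifier $f^*$ attains the minimum achievable error rate $\eta$ on $\Pt$, so $R^{\eta}_{\zeroone}(f^*)=\eta$; and by the definition of the spurious correlation, $f^{\gamma}$ agrees with the clean label $y$ with probability $\gamma$, hence errs on the clean label with probability $1-\gamma$.

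First I would compute $R^{\eta}_{\zeroone}(f^{\gamma})$ by conditioning on whether $f^{\gamma}$ matches the clean label and whether the label noise flips that label. Because the noise is label-independent---it flips $y$ with probability $\eta$ regardless of $x$---the event $\{f^{\gamma}(x)=y\}$, which depends only on the spurious component of $x$, is independent of the flip event. The prediction $f^{\gamma}(x)$ disagrees with the observed noisy label $\yt$ in exactly two disjoint cases: $f^{\gamma}$ is correct on the clean label but the label is flipped (probability $\gamma\eta$), or $f^{\gamma}$ is wrong on the clean label but the label is not flipped (probability $(1-\gamma)(1-\eta)$). Summing gives $R^{\eta}_{\zeroone}(f^{\gamma})=\gamma\eta+(1-\gamma)(1-\eta)$.

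Next I would reduce the target inequality $\eta \leq \gamma\eta+(1-\gamma)(1-\eta)$ algebraically. Subtracting $\gamma\eta$ from both sides yields $\eta(1-\gamma)\leq(1-\gamma)(1-\eta)$; since $\gamma\in(0.5,1)$ gives $1-\gamma>0$, dividing through reduces the claim to $\eta\leq 1-\eta$, i.e.\ $\eta\leq 1/2$, which holds by hypothesis (and strictly when $\eta<1/2$). This also transparently exposes the dependence on $\gamma$ and $\eta$ and connects back to the failure-mode discussion of Section~\ref{sec:understanding}.

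There is no genuinely hard step here; the one point requiring care is justifying $R^{\eta}_{\zeroone}(f^*)=\eta$ rather than merely invoking Bayes-optimality as a black box. Under Assumption~\ref{assump:separable} the clean label is deterministic in $x$, so with label-independent noise at rate $\eta<1/2$ the clean-label posterior satisfies $P(y\mid x)=1-\eta>1/2$; the Bayes rule therefore predicts the clean label and errs exactly on the flipped points, giving error probability $\eta$. Alternatively one could bypass the explicit value of $R^{\eta}_{\zeroone}(f^*)$ entirely, since Bayes-optimality already implies $R^{\eta}_{\zeroone}(f^*)\leq R^{\eta}_{\zeroone}(f^{\gamma})$; I would nevertheless retain the computed form of $R^{\eta}_{\zeroone}(f^{\gamma})$ to keep the comparison concrete.
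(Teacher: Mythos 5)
Your proposal is correct and follows essentially the same route as the paper: both compute $R^{\eta}_{\zeroone}(f^*)=\eta$ (via separability of the clean distribution) and $R^{\eta}_{\zeroone}(f^{\gamma})=\gamma\eta+(1-\gamma)(1-\eta)$, then reduce the comparison to $\eta\leq 1/2$. The only cosmetic difference is that the paper argues by contradiction while you verify the inequality directly.
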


\begin{proof}
The Bayes optimal classifier using all features for the noisy data has 0-1 loss:
\begin{equation*}
    R^\eta_{\zeroone}(f^*) = R_{\zeroone}(f^*) + \eta
\end{equation*}

The optimal classifier using only a set of $\gamma$-correlated features has the following 0-1 loss:
\begin{equation*}
    % R^\eta_{\zeroone}(f^\gamma) = (1-\gamma) + \gamma*\eta - (1-\gamma)*\eta = 1-\gamma+(2\gamma-1)\eta
    R^\eta_{\zeroone}(f^\gamma) = (1-\gamma)(1-\eta) + \gamma*\eta  = 1-\gamma+(2\gamma-1)\eta
\end{equation*}
We use proof by contradiction. Suppose that $R^\eta_{\zeroone}(f^*) > R^\eta_{\zeroone}(f^\gamma)$. Recall that $\gamma \in [0,1]$, then
\begin{align*}
    R^\eta_{\zeroone}(f^*) &> R^\eta_{\zeroone}(f^\gamma) \\
    R_{\zeroone}(f^*) + \eta &> 1-\gamma+(2\gamma-1)\eta \\
    (2-2\gamma)\eta &> 1-\gamma-R_{\zeroone}(f^*)  \\
    \eta &> \frac{1}{2} - \frac{R_{\zeroone}(f^*)}{2-2\gamma}  \tag{$1-\gamma > 0$}  \\
    \eta &> \frac{1}{2} \tag{$R_{\zeroone}(f^*)=0$}
\end{align*}
As $\eta \in [0, 0.5)$, the last inequality cannot be satisfied so the reverse $R^\eta_{\zeroone}(f^*) \leq R^\eta_{\zeroone}(f^\gamma)$ must be true. 
\end{proof}

According to Proposition \ref{prop:gamma-corr}, at least in expectation, the global minimum is still obtained by the Bayes optimal classifier even under the presence of label noise. However, in practice with only a finite number of samples, ERM (w/ or w/o regularization) could fail to learn a generalizable classifier when the noise level is high because of the joint effort between the low-dimensional spuriously correlated features and noise in the overparameterized setting. With limited samples, the training error refuses to converge easily to the Bayes optimal error since it can be further reduced by the memorization of noisy samples. In general, increasing the number of data would gradually resolve the issues of both label noise and spurious correlation (Figure~\ref{fig:ndata}).

\section{Synthetic Experiments} \label{app:synthetic}
Our synthetic experiments are based on the framework developed in \citet{Sagawa2020AnIO}. Without loss of generality, we let $y\in \{-1, +1\}$ instead of $\{0, 1\}$ for the clarify of describing the data-generating process. Let there be a spurious attribute $a \in \{-1, +1\}$ with spurious correlation $\gamma \in [0.5, 1]$, such that in the training set, we have $p(a=y)=\gamma$. Subsequently, four groups are created: ($g_1$): $y=-1, a=-1$; ($g_2$): $y=-1, a=1$; ($g_3$): $y=1, a=-1$; ($g_4$): $y=1, a=1$. Let there be $n$ training data points. The number of data in each group $|g_1|=|g_4|=\gamma n/2$, $|g_2|=|g_3|=(1-\gamma)n/2$. The data is sampled as follows:
\begin{align*}
    &\xinv \sim \gN(y\vone, \sigma^2_{\inv}\rmI_{\dinv}),
    ~~~~~~\xspu \sim \gN(a\vone, \sigma^2_{\spu}\rmI_{\dspu}), 
    ~~~~~~\xnui \sim \gN(\vzero, \sigma^2_{\nui}\rmI_{\dnui}/\dnui), 
\end{align*}
In this case, we set $\dinv=\dspu$ and $\sigma^2_{\inv}=\sigma^2_{\spu}$ so that the two features are expected to have the same complexity and noise level. For the main plots (Figure~\ref{fig:noise},\ref{fig:ndata}) w.r.t. noise level $\eta$, we set $\gamma=0.99$, $n=1000$, $\dnui=3000$, $\dinv=\dspu=5$, $\sigma^2_{\inv}=\sigma^2_{\spu}=0.25$. We fit the model with logistic regression with regularization $\lambda=10^{-4}$. For each configuration (by noise level $\eta$ or number of data $n$), we repeat the experiments 10 times with seeds from $0-9$ to obtain the mean and standard error. 

For the figure of the decision boundaries (Figure~\ref{fig:decision-boundary}), we set $\dinv=\dspu=1$ for easier visualization in 2D. We also set $\sigma^2_{\inv}=\sigma^2_{\spu}=0.1$ and remove regularization, so that the plot is more perceptible than $\sigma^2_{\inv}=0.25$. Nonetheless, we also show the decision boundaries in Figure~\ref{fig:synthetic-appendix} for $\sigma^2_{\inv}=\sigma^2_{\spu}=0.25, \dinv=\dspu=1, \lambda=10^{-4}$, which essentially has the same pattern. The random seed is $0$.

\begin{figure}
\centering
    \includegraphics[width=0.3\columnwidth]{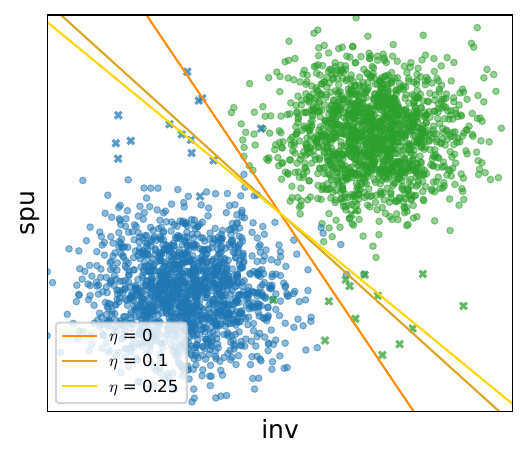}
    \caption{Decision Boundaries for $\sigma^2_{\inv}=\sigma^2_{\spu}=0.25, \dinv=\dspu=1, \lambda=10^{-4}$}
    \label{fig:synthetic-appendix}
\end{figure}

\subsection{Comparison to other Synthetic Experiments}
\citet{Sagawa2020AnIO} proposed the concept of spurious-core information ratio (SCR): The SCR is the ratio of the variance between the invariant (core) features and the spurious features: $\sigma^2_{\inv}/\sigma^2_{\spu}$. The higher the SCR, the more variations the invariant features. This will lead to noisier predictions for the invariant classifier (that only rely on the invariant features), subsequently resulting in the overparameterized models relying more on the spurious features. We do not assume that the SCR is high here and set $\sigma^2_{\inv}=\sigma^2_{\spu}$, attaining a more general setting where invariant features are at least as informative as the spurious features in both noise-free and noisy settings. 

\subsection{Training with Linearly Projected Features}
So far, the invariant features $\xinv$ and spurious features $\xspu$ occupy independent dimensions. In this section, we extend the synthetic setting such that invariant and spurious features are not perfectly disentangled into different dimensions. The only requirement is that they remain orthogonal to each other: $\xinv \perp \xspu$. To achieve this, we apply a random projection matrix to the dataset after sampling and rerun the same experiments. Based on the rotational invariance of logistic regression \citep{ng2004feature}, we expect the results to not change by much. As shown in Figure \ref{fig:project-synthetic}, this is indeed the case, showing that our result is also applicable to linearly transformed features. 
\begin{figure}
\centering
    \begin{subfigure}[b]{0.3\columnwidth}
    \centering	
    \includegraphics[width=\columnwidth]{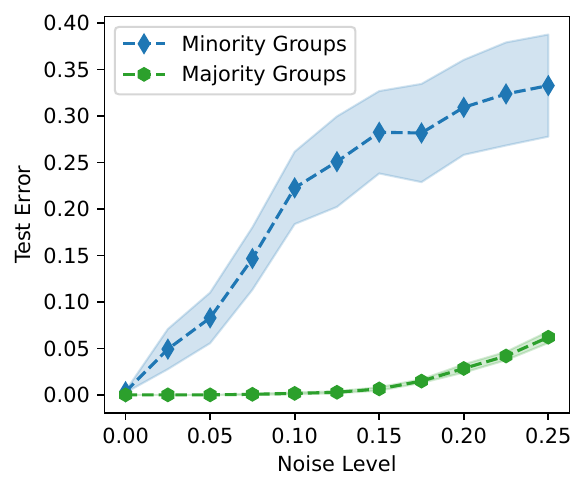}
    \caption{Label noise vs. test error.}
    \label{fig:project-noise}
    \end{subfigure}
    % \hspace{0.2in}
    \begin{subfigure}[b]{0.3\columnwidth}
    \centering	
    \includegraphics[width=\columnwidth]{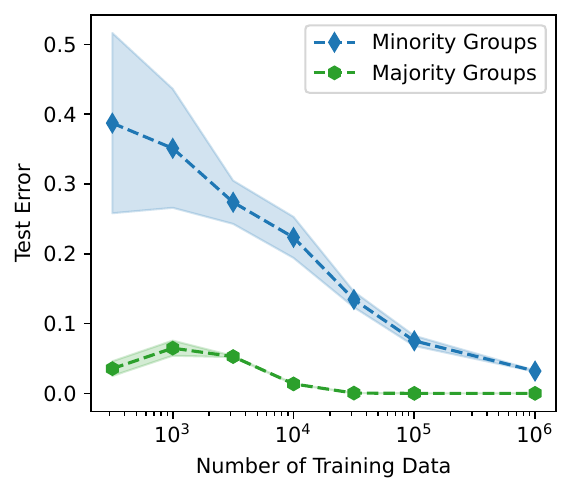}
    \caption{Number of data vs. test error.}
    \label{fig:project-ndata}
    \end{subfigure}
    % \hfill
    \begin{subfigure}[b]{0.3\columnwidth}
    \centering	
    \includegraphics[width=\columnwidth]{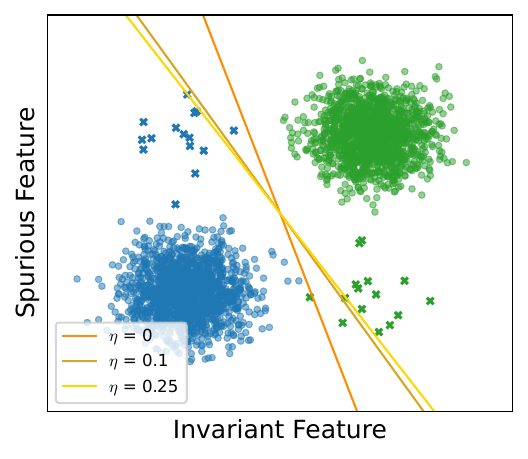}
    \caption{Decision boundaries.}
    \label{fig:project-decision-boundary}
    \end{subfigure}
    % \hspace{0.2in}
    \caption{Simulations on \textit{randomly projected} synthetic data trained with overparameterized logistic regression. The dotted lines are the means and the shaded regions are for the standard error from 5 independent runs. The result strongly resembles the case in Figure \ref{fig:synthetic}, showing that our result is also applicable to linearly transformed features.}
    \label{fig:project-synthetic}
\end{figure}

% \subsection{Norm Comparison}
% In addition to the main synthetic experimental setting, we train the purely invariant classifier $\rvw^{(\inv)}$ and purely spurious classifier $\rvw^{(\spu)}$. Recall that $\rvw^{(\spu)}_{\spu}$ is the weight for spurious features of $\rvw^{(\spu)}$, and $\rvw^{(\inv)}_{\inv}$ is the weight for the invariant features of $\rvw^{(\inv)}$.
% We plot the $\ell_2$-norm of $\rvw^{(\spu)}$, $\rvw^{(\inv)}$, $\rvw^{(\spu)}_{\spu}$, $\rvw^{(\inv)}_{\inv}$ with respect to increasing noise level in Figure \ref{fig:noise-norm-appendix}. The results show that as noise level goes up, $\rvw^{(\spu)}$ indeed tends to have smaller norm than $\rvw^{(\inv)}$. 
% At the same time, the change in norm gap between $\rvw^{(\spu)}_{\spu}$ and $\rvw^{(\inv)}_{\inv}$ is relatively small. Moreover, the classifier $\rvw$ that uses all features also gets closer to $\rvw^{(\spu)}$. The empirical evidence is consistent with the implications of Theorem 4.2.

% \begin{figure}
% \centering
%     % \hfill
%     \includegraphics[width=0.3\columnwidth]{figures/noise_norm_99.pdf}
%     % \hspace{0.2in}
%     \caption{Classifier norm vs. label noise.}
%     \label{fig:noise-norm-appendix}
% \end{figure}

\section{Main Experiments} \label{app:exp}

\subsection{Datasets} \label{app:datasets}
All datasets are added to or implemented by the Domainbed \citep{gulrajani2020search} framework for better comparison among the algorithms. 

\subsubsection{Subpopulation Shifts with Known Groups}
\textbf{Waterbirds} \citep{Sagawa2019DistributionallyRN} is a binary bird-type classification dataset adapted from CUB dataset \citep{wah2011cub} and Places dataset \citep{zhou2017places}. There are a total of four groups: waterbirds on water $(\gG_1)$, waterbirds on land $(\gG_2)$, landbirds on water $(\gG_3)$, and landbirds on land $(\gG_4)$. In the training set, there are 3498 (73\%), 184 (4\%), 56 (1\%), and 1057 (22\%) images for the 4 groups. 
Apparently, there is a strong spurious correlation between the bird type and the background in the training set. 
The majority groups are $\gG_1$ and $\gG_4$, and the minority groups are $\gG_2$, $\gG_3$. 
For the validation and test sets, there are an equal number of landbirds or waterbirds allocated to water and land backgrounds. We follow the standard train, val, and test splits. 

\textbf{CelebA} \citep{Sagawa2019DistributionallyRN} is a binary hair color blondness prediction dataset adapted from \citet{liu2015deep}. There are four groups: $(\gG_1)$ non-blond woman (71629 samples, 44\%); $(\gG_2)$ non-blond man (66874 samples, 41\%); $(\gG_3)$ blond woman (22880 samples, 14\%); $(\gG_4)$ blond man (1387 samples, 1\%). 
There exists a strong spurious correlation between gender man and non-blondness.
The majority groups are $\gG_1$, $\gG_2$, $\gG_3$, and the minority group is $\gG_4$ . 
We follow the standard train, val, and test splits. 

\textbf{CivilComments} \citep{borkan2019nuanced} is a binary text toxicity classification task. We used the version implemented in WILDS \citep{koh2021wilds}. There are 16 overlapping groups based on 8 identities including male, female, LGBT, black, white, Christian, Muslim, other religion. In this work, we use the default setting and create 4 groups only based on the black identity. 

\subsubsection{Subpopulation Shifts without Group Information}
We create two new subpopulation-shift datasets adapted from real-world datasets including Waterbirds and CelebA:

\textbf{Waterbirds+}: Based on the 4 groups in Waterbirds, we create two new environments $e_1, e_2$. Each environment has half the data from each majority group ($\gG_1$, $\gG_4$). Then we allocate $1/3$ of the data from each minority group ($\gG_2$, $\gG_3$) to $e_1$ and the rest $2/3$ to $e_2$. This resembles the CMNIST setup so that $e_2$ has twice the amount of the minority data as $e_1$, such that background is not an invariant feature across the two environments because predicting with only the background features will result in different error rates across environments. 

\textbf{CelebA+}: Similar to Waterbirds+, we create two new environments $e_1, e_2$. Each environment has half the data from each majority group ($\gG_1$, $\gG_2$, $\gG_3$). Then we allocate $1/3$ of the data from the minority group ($\gG_4$) to $e_1$ and the rest $2/3$ to $e_2$. Under this allocation, gender is not an invariant feature across the two environments because predicting hair blondness with only the gender features will result in different error rates across environments. 

\textbf{CivilComments+}: Although there are 16 overlapping groups with 8 identities, we only create 4 groups based on the black identity. Thus, the setup can exactly follow Waterbirds+. 

We also use the synthetic ColoredMNIST dataset:

\textbf{CMNIST} \citep{arjovsky2019invariant} is one of the most commonly experimented synthetic domain generalization datasets. Based on MNIST, digits 0-4 and 5-9 are categorized into classes 0 and 1, respectively. Additional two-color information is added to the dataset by creating a separate channel, resulting in the input size being ($28 \times 28 \times 2$). There are three environments $e \in \{0.1, 0.2, 0.9\}$ that mark the correlations between the color and the digits. In particular, in $e_1$, 10\% of the green instances are 1 with the rest 90\% being 0, and 10\% of the red instances are 0 with the rest 90\% being 1. Thus, using the color information to predict 1 with green and predict 0 with red, gives 90\%, 80\%, and 10\% accuracy for the three environments. By default, the labels are corrupted with 25\% uniform label noise. 

\subsubsection{Domain Shifts}
We choose four domain-shift datasets from the Domainbed framework \citep{gulrajani2020search}. All four datasets have input sizes (3, 224, 224). 

\textbf{PACS} \citep{li2017deeper} is a 7-class image classification dataset specifically designed for domain generalization, encompassing 4 distinct domains: Photo (1670 images), Art Painting (2048 images), Cartoon (2344 images), and Sketch (3929 images). There are 9991 samples in total.

\textbf{VLCS} \citep{fang2013unbiased} is a 5-class image dataset with 4 domains: Caltech101, LabelMe, SUN09, and VOC2007. Each domain is a standalone image classification dataset released by distinct research teams. There are 10729 samples in total. 

\textbf{Office-Home} \citep{Venkateswara2017DeepHN} is a 65-class image classification dataset for objects that can be found typically in office or home environments. The dataset has 4 domains: Art, Clip Art, Product, and Real-World. There are 15588 samples in total. 

\textbf{Terra Incognita} \citep{beery2018recognition} is a 10-class wild-animal classification dataset with 4 domains: L100, L38, L43, L46. Each domain represents a distinct location where camera traps are placed for photography. There are 24788 samples in total. 

\subsection{Algorithms}
We choose a few most influential and well-cited algorithms with theoretical guarantees for comparisons. 
\begin{enumerate}[leftmargin=*, itemsep=0pt]
    \item \textbf{Mixup} \citep{zhang2017mixup} is a simple yet effective approach that has been widely studied from both empirical and theoretical perspectives. It interpolates the input and the loss of the output to create pseudo-training samples for better out-of-distribution generalization.
    \item \textbf{IRM} \citep{arjovsky2019invariant}: IRM is the first algorithm proposed to learn invariant/causal predictors from multiple environments with deep learning. It augments the ERM with an additional objective of achieving optimal loss for all training environments simultaneously, preventing the model from learning spurious correlations that are only applicable to certain environments.
    \item \textbf{V-REx} \citep{krueger2021out}: V-REx extends IRM with a simplified regularizer that minimizes the variance of risks across environments. It has the benefit of being robust to covariate shifts in the input distribution. It is also theoretically analyzed for homoskedastic environments (consistent noise level). 
    \item \textbf{GroupDRO} \citep{Sagawa2019DistributionallyRN}: GroupDRO prioritizes optimizing the worst-group risk to address subpopulation shifts. It is one of the first algorithms proposed to specifically address subpopulation shifts by minimizing the worst-group risk directly. By utilizing the group information effectively, it has been considered the gold standard amongst many subsequent works that try to deal with subpopulation shifts without group labels.
\end{enumerate}

\subsection{Experimental Setup}
We use Domainbed \citep{gulrajani2020search} to run experiments for all 9 datasets. We report the performance (average and standard error) of the models trained out of 3 trials and each trial searches across 20 sets of different hyperparameters according to the default setting. We train all models for 5000 steps.  for CMNIST, and 16 for the rest real-world datasets. We use a simple Convolutional Neural Network (CNN) for CMNIST, and ResNet 50 pretrained on Imagenet for the rest datasets. For all datasets, we add 10\%, 25\% label noise to test the performance of different algorithms. 

For Waterbirds, CelebA, and their derivative Waterbirds+, CelebA+ datasets, we perform standard data augmentation according to \citet{Sagawa2019DistributionallyRN} using Pytorch (This is slightly different from the default augmentation implemented by Domainbed). Since Waterbirds and CelebA have predefined train, val, test splits and group information, we report both the average and worst-group (WG) test accuracy of the model selected according to the best WG performance in the standard validation split with early stopping, which follows most of the algorithms evaluated on these datasets. 

For CMNIST, we do not perform any data augmentation. The evaluation is only done on the 3rd environment where the minority groups from the training environments become the majority. Due to the simplicity of the dataset, we have tested more fine-grained label noise $\eta \in \{0, 0.05, 0.1, 0.15, 0.2\}$.

For the domain-shift datasets, we adopt Domainbed's data augmentation \citep{gulrajani2020search}. Since there are multiple different environments, we run a single-environment cross-test by taking the environments one by one as the test set and the rest as the training set. Then, we report the average accuracy of the models across all environments selected using 20\% of the test data without early stopping. 

\section{Additional Main Results} \label{app:additional}
\subsection{CivilComments (NLP Dataset)}
\textbf{CivilComments} \citep{borkan2019nuanced} is a binary text toxicity classification task. We used the version implemented in WILDS \citep{koh2021wilds}. There are 16 overlapping groups based on 8 identities including male, female, LGBT, black, white, Christian, Muslim, other religion. In this work, similar to Waterbirds and CelebA, we create 4 groups only based on black identity. We also simulate the two-environment subpopulation shift setting as done for Waterbirds+ and CelebA+, and we call it \textbf{CivilComments+}. 

We use the base uncased DistillBERT model as the feature extractor. We feed the final-layer representation of the [CLS] token into a linear classifier for toxicity detection. We report the worst-group error with the worst-group selection. Note that Mixup cannot be directly applied to the NLP dataset since the input is discrete words that cannot be interpolated. 

As shown in the Table below, GroupDRO has a clear advantage over other algorithms in the noise-free setting. However, ERM still performs competitively compared to other invariance learning algorithms such as IRM and V-REx, even under label noise setting, which is consistent with the existing empirical observations. 

\begin{table}[h]
\begin{center}
\begin{tabular}{lcccc}
\toprule
\textbf{Algorithm/$\eta$}        & \textbf{0}    & \textbf{0.1} & \textbf{0.25} & \textbf{Avg}              \\
\midrule
ERM                       & $89.8\pm_{0.4}$           & $67.9\pm_{0.4}$           & $55.4\pm_{0.1}$           & 71.0                      \\
IRM                       & $89.2\pm_{0.2}$           & $66.9\pm_{0.4}$           & $54.2\pm_{0.3}$           & 70.1                      \\
GroupDRO                  & $\textbf{91.9}\pm_{0.2}$           & $\textbf{69.7}\pm_{0.3}$           & $55.4\pm_{0.1}$           & \textbf{72.3}                      \\
VREx                      & $89.6\pm_{0.3}$           & $69.0\pm_{0.4}$           & $\textbf{56.0}\pm_{0.3}$           & 71.6                      \\
\bottomrule
\end{tabular}
\end{center}
\caption{Worst-group accuracy (\%) for subpopulation shift on CivilComments Dataset.}
\label{tab:civilcomments}
\end{table}

\begin{table}[h]
\begin{center}
\begin{tabular}{lcccc}
\toprule
\textbf{Algorithm/$\eta$}        & \textbf{0}    & \textbf{0.1} & \textbf{0.25} & \textbf{Avg}              \\
\midrule
ERM                       & $69.2\pm_{1.2}$           & $59.8\pm_{1.9}$           & $52.5\pm_{1.0}$           & 60.5                      \\
IRM                       & $60.3\pm_{4.5}$           & $54.8\pm_{4.0}$           & $50.3\pm_{1.8}$           & 55.1                      \\
GroupDRO                  & $\textbf{73.3}\pm_{0.4}$           & $\textbf{60.2}\pm_{0.9}$           & $52.2\pm_{0.5}$           & \textbf{61.9}                      \\
VREx                      & $64.5\pm_{4.2}$           & $58.7\pm_{1.7}$           & $\textbf{54.9}\pm_{0.3}$           & 59.4                      \\
\bottomrule
\end{tabular}
\end{center}
\caption{Worst-group accuracy (\%) for subpopulation shift on CivilComments+ Dataset.}
\label{tab:civilcomments+}
\end{table}

\subsection{Evaluating More Algorithms} \label{app:more-alg}
We added evaluations of four additional algorithms used for domain adaptation and domain generalization, including:
\begin{enumerate}[leftmargin=*, itemsep=0pt]
    \item \textbf{CORAL} \citep{Sun2016DeepCC}: A moment matching technique that matches the mean and variance of the features between the training and test domains.
    \item \textbf{IB\_IRM} \citep{Ahuja2021InvariancePM}: An IRM based algorithm that is combined with information bottleneck as an additional regularization. 
    \item \textbf{Fish} \citep{Shi2021GradientMF}: A gradient matching technique that aligns the gradients of different training domains by maximizing their inner products.
    \item \textbf{Fishr} \citep{rame2022fishr}: A gradient matching technique based on regularizing the inter-domain variance of the intra-domain gradient variance.
\end{enumerate}
We show the results for three datasets from each category, including corrupted CMNIST (synthetic subpopulation shift), Waterbirds+ (real-world subpopulation shift), and PACS (real-world domain shift). We observe that similar trend still holds compared to other DG algorithms presented in the main paper. Thus, we hypothesize that these results can be extended to other datasets that are similar to the ones evaluated. 

\subsubsection{CMNIST}
\begin{center}
\begin{tabular}{lcccccc}
\toprule
\textbf{Algorithm/$\eta$} &  \textbf{0}   & \textbf{0.1}   & \textbf{0.25}  & \textbf{Avg}              \\
\midrule
ERM                    & $97.1\pm_{0.1}$           & $77.8\pm_{1.9}$           & $39.5\pm_{1.2}$           & 71.5                      \\
Mixup                     & $\textbf{97.3}\pm_{0.2}$           & $82.3\pm_{0.9}$           & $24.7\pm_{1.9}$           & 68.1                      \\
GroupDRO                  & $\textbf{97.3}\pm_{0.1}$           & $84.5\pm_{1.7}$           & $48.3\pm_{1.0}$           & 76.7                      \\
IRM                       & $96.8\pm_{0.3}$           & $\textbf{86.5}\pm_{3.8}$           & $\textbf{73.2}\pm_{9.1}$           & \textbf{85.5}                      \\
VREx                      & $96.7\pm_{0.1}$           & $84.1\pm_{1.8}$           & $57.8\pm_{2.0}$           & 79.5                      \\\hline
CORAL                     & $97.1\pm_{0.2}$           & $77.3\pm_{1.4}$           & $37.2\pm_{0.7}$           & 70.5                      \\
IB\_IRM                    & $96.5\pm_{0.5}$           & $83.7\pm_{4.4}$           & $71.3\pm_{5.8}$           & 83.8                      \\
Fish                      & $97.2\pm_{0.2}$           & $78.8\pm_{0.6}$           & $37.7\pm_{1.3}$           & 71.2                      \\
Fishr                     & $\textbf{97.3}\pm_{0.0}$           & $82.0\pm_{1.3}$           & $67.6\pm_{9.4}$           & 82.3                      \\
\bottomrule
\end{tabular}
\end{center}

\subsubsection{Waterbirds+}

\begin{center}
\begin{tabular}{lcccc}
\toprule
\textbf{Algorithm/$\eta$}        & \textbf{0}       & \textbf{0.1}   & \textbf{0.25}  & \textbf{Avg}              \\
\midrule
ERM                       & $81.5\pm_{0.4}$           & $64.6\pm_{0.5}$           & $50.4\pm_{0.9}$           & 65.5                      \\
IRM                       & $78.5\pm_{0.8}$           & $63.3\pm_{0.5}$           & $47.8\pm_{1.8}$           & 63.2                      \\
GroupDRO                  & $\textbf{82.6}\pm_{0.4}$           & $65.6\pm_{0.8}$           & $51.2\pm_{0.7}$           & \textbf{66.5}                      \\
Mixup                     & $79.8\pm_{0.9}$           & $61.8\pm_{2.0}$           & $51.3\pm_{1.5}$           & 64.3                      \\
VREx                      & $78.3\pm_{0.3}$           & $\textbf{66.6}\pm_{0.9}$           & $51.9\pm_{0.5}$           & 65.6                      \\\hline
CORAL                     & $80.8\pm_{0.6}$           & $64.3\pm_{1.8}$           & $\textbf{53.5}\pm_{1.8}$           & 66.2                      \\
IB\_IRM                    & $74.5\pm_{1.2}$           & $60.8\pm_{1.5}$           & $48.6\pm_{1.4}$           & 61.3                      \\
Fish                      & $82.3\pm_{0.3}$           & $66.4\pm_{0.4}$           & $49.8\pm_{2.4}$           & 66.2                      \\
Fishr                     & $79.4\pm_{0.7}$           & $62.1\pm_{0.5}$           & $\textbf{53.5}\pm_{1.3}$           & 65.0                      \\
\bottomrule
\end{tabular}
\end{center}

\subsubsection{PACS}
\begin{center}
\begin{tabular}{lccc}
\toprule
\textbf{Algorithm/$\eta$}        & \textbf{0.1}         & \textbf{0.25}        & \textbf{Avg}              \\
\midrule
ERM                       & $82.0\pm_{0.5}$           & $74.5\pm_{0.6}$           & 78.3                      \\
IRM                       & $80.4\pm_{1.2}$           & $71.0\pm_{1.9}$           & 75.7                      \\
GroupDRO                  & $82.4\pm_{0.3}$           & $74.7\pm_{0.4}$           & 78.5                      \\
Mixup                     & $\textbf{83.6}\pm_{0.1}$           & $75.2\pm_{0.9}$           & \textbf{79.4}                      \\
VREx                      & $81.4\pm_{0.2}$           & $73.5\pm_{0.7}$           & 77.4                      \\\hline
CORAL                     & $82.6\pm_{0.4}$           & $75.9\pm_{0.3}$           & 79.2                      \\
IB\_IRM                    & $77.0\pm_{2.5}$           & $67.8\pm_{3.5}$           & 72.4                      \\
Fish                      & $83.1\pm_{0.1}$           & $\textbf{77.0}\pm_{0.7}$           & \textbf{80.1}                      \\
Fishr                     & $81.7\pm_{0.4}$           & $74.5\pm_{0.5}$           & 78.1                      \\
\bottomrule
\end{tabular}
\end{center}

\section{Limitations} \label{app:limit}
\subsection{Feature Availability}
In practice, when the feature extractors for image and text are commonly non-linear, it is uncertain whether the invariant and spurious features can be perfectly extracted by the deep learning model. As our setup assumes the availability of invariant and spurious features, our theoretical results in Section \ref{sec:erm-failure} under linear settings may not be directly applicable in the real world. 

We would like to discuss two aspects regarding the potential limitations due to the availability of invariant, spurious, and nuisance features:
\begin{enumerate}[leftmargin=*, itemsep=0pt]
    \item \textbf{Failure mode of ERM:} Let's briefly consider two other scenarios where the ideal feature extractor does not exist:
    \begin{enumerate}
        \item Spurious features are not fully extracted: This can be viewed as having a weaker spurious correlation and our results still apply.
        \item Invariant features are not fully extracted: The model has to exploit more spurious correlations and nuisance features, so it already fails to generalize.
    \end{enumerate}
    As such, we believe that showing the failure mode of ERM under such an ideal situation actually encompasses some other failure modes without the ideal feature extractors. The assumption of having orthogonal and spurious features is not restrictive to demonstrate the failure mode of ERM.
    \item \textbf{Advantage of DG algorithms in practice:} Typically, DG algorithms aim to learn the invariant representations and require the existence of invariant features to function. If the invariant features cannot be fully extracted, there is also no guarantee for DG algorithms to have the advantage over ERM. The robustness to label noise does not necessarily imply that the invariant representations could be learned. Rather, it has the implication of less memorization and not converging to the suboptimal solution when label noise is present. Thus, the unavailability of the ideal sets of features could be responsible for DG algorithms not having a clear advantage over ERM in real-world noisy datasets. However, without enough evidence, we do think that pretraining the model with better representations is more responsible for the reduced performance gaps between different DG algorithms vs. ERM compared to the lack of an ideal feature extractor. 
\end{enumerate}

\subsection{Linear Separability}
Our theoretical analysis is also based on the assumption that the invariant features are linearly separable in the noise-free setting. However, in practice, the invariant features can achieve the Bayes optimal error rate $>0\%$, making the invariant features non-linearly separable. Our theorem then is not directly applicable. Future work that explicitly allows non-separable invariant features will improve the analysis.

However, we think that our assumption is in fact more inclusive than it is restrictive. This is because the linear separability is only assumed for the noise-free data distribution. In the cases when the dataset is not linearly separable, we can view that as a result of adding some form of label noise to the clean data. A higher noise level corresponds to worse non-separability. However, we acknowledge that this analogy is by no means comprehensive of all cases.
  
\subsection{Generalization to Other DG Methods}
Our analysis in Section~\ref{sec:dg-analysis} addresses the noise robustness of two invariance learning algorithms: IRM and V-REx. However, as DG algorithms are designed for various aspects of the training regime with diverse mechanisms (discussed in Section \ref{sec:related-work}), not all domain generalization methods can be theoretically proven to possess such a desirable property. For example, from the empirical observation in Appendix \ref{app:more-alg}, algorithms such as Fish which aligns the gradient of samples between environments may not be robust to noise on the CMNIST dataset. More comprehensive analyses are needed to incorporate all algorithms.

\section{Noise Memorization on Waterbirds Data} \label{app:noise}

\begin{table}[h]
\begin{center}
\begin{tabular}{lcccc}
\toprule
\textbf{Algorithm}        & \textbf{Waterbirds+ 0.1}       & \textbf{Waterbirds+ 0.25}   & \textbf{Waterbirds 0.1}  & \textbf{Waterbirds 0.25}              \\
\midrule
ERM                       & $28.3\pm_{3.8}$           & $35.1\pm_{0.7}$           & $33.4\pm_{0.7}$           & $58.4\pm_{5.6}$           \\
IRM                       & $27.3\pm_{5.6}$           & $38.7\pm_{1.8}$           & $30.4\pm_{0.9}$           & $54.3\pm_{4.7}$           \\
GroupDRO                  & $23.4\pm_{2.1}$           & $36.3\pm_{0.6}$           & $32.9\pm_{0.1}$           & $57.3\pm_{3.7}$           \\
Mixup                     & $28.2\pm_{3.6}$           & $34.6\pm_{0.6}$           & $34.9\pm_{0.9}$           & $57.2\pm_{3.0}$           \\
VREx                      & $24.8\pm_{4.9}$           & $37.8\pm_{0.9}$           & $35.3\pm_{0.9}$           & $48.1\pm_{0.3}$           \\
\bottomrule
\end{tabular}
\end{center}
\caption{Training noise memorization on Waterbirds+}
\label{tab:waterbirds-noise}
\end{table}

As shown in Table~\ref{tab:waterbirds-noise}, the memorization issue is much less severe. The likely reason is because early-stopping is applied to Waterbirds, CelebA, and CivilComments, so that model checkpoints in the early-training steps with less memorization are selected. Besides, pretraining on ImageNet allows the models to fit the downstream tasks faster. Thus, good performance can be obtained without prolonged training compared to training from scratch. 

\section{Implementation Details of ERM}
The classic environment-agnostic ERM randomly samples training data from a pool of data from all environments. For the environment-balanced ERM algorithm implemented by Domainbed \citep{gulrajani2020search}, in each step, an equal amount of data is sampled from each environment according to batch\_size. Thus, the actual "batch size" equals to num\_data $\times$ batch\_size. Datasets with more environments are trained effectively with more data points.

\section{Results}
\label{result-detail}
All these results are obtained using 5000 training steps and dataset-dependent batch size. 
\subsection{CMNIST}
\subsubsection{Averages}
\begin{center}
\begin{tabular}{lcccccc}
\toprule
\textbf{Algorithm/$\eta$}        & \textbf{0} &\textbf{0.05}        & \textbf{0.1} & \textbf{0.15} & \textbf{0.25}  & \textbf{Avg}              \\
\midrule
ERM                       & $97.1\pm_{0.1}$           & $89.2\pm_{0.6}$           & $77.8\pm_{1.9}$           & $63.1\pm_{0.8}$           & $39.5\pm_{1.2}$           & 73.3                      \\
IRM                       & $96.8\pm_{0.3}$           & $89.5\pm_{1.1}$           & $\textbf{86.5}\pm_{3.8}$           & $76.6\pm_{7.0}$           & $\textbf{73.2}\pm_{9.1}$           & \textbf{84.5}                      \\
GroupDRO                  & $\textbf{97.3}\pm_{0.1}$           & $\textbf{91.1}\pm_{0.7}$           & $84.5\pm_{1.7}$           & $74.9\pm_{1.7}$           & $48.3\pm_{1.0}$           & 79.2                      \\
Mixup                     & $\textbf{97.3}\pm_{0.2}$           & $90.8\pm_{1.1}$           & $82.3\pm_{0.9}$           & $66.4\pm_{2.9}$           & $24.7\pm_{1.9}$           & 72.3                      \\
VREx                      & $96.7\pm_{0.1}$           & $90.9\pm_{0.8}$           & $84.1\pm_{1.8}$           & $\textbf{82.3}\pm_{3.1}$           & $57.8\pm_{2.0}$           & 82.3                      \\
\bottomrule
\end{tabular}
\end{center}

\subsubsection{Noise Memorization}
\begin{center}
\begin{tabular}{lcccccc}
\toprule
\textbf{Algorithm/$\eta$}      &\textbf{0.05}        & \textbf{0.1} & \textbf{0.15} & \textbf{0.25}  & \textbf{Avg}              \\
\midrule
ERM                             & $26.6\pm_{1.5}$           & $31.7\pm_{6.4}$           & $53.7\pm_{3.7}$           & $74.5\pm_{1.6}$           & 46.6                      \\
IRM                             & $16.6\pm_{3.6}$           & $16.6\pm_{5.8}$           & $32.0\pm_{12.2}$          & $\textbf{26.8}\pm_{9.0}$           & 23.0                      \\
GroupDRO                        & $10.0\pm_{1.2}$           & $\textbf{15.7}\pm_{1.7}$           & $26.2\pm_{2.2}$           & $54.5\pm_{2.8}$           & 26.6                      \\
Mixup                           & $17.5\pm_{1.0}$           & $22.1\pm_{1.1}$           & $38.1\pm_{3.6}$           & $77.7\pm_{1.9}$           & 38.9                      \\
VREx                            & $\textbf{8.3}\pm_{0.8}$            & $16.6\pm_{2.0}$           & $\textbf{18.3}\pm_{4.0}$           & $41.2\pm_{2.2}$           & \textbf{21.1}                      \\
\bottomrule
\end{tabular}
\end{center}

\subsection{Waterbirds}
\subsubsection{Worst-Group Test Accuracy}

\begin{center}
\begin{tabular}{lcccc}
\toprule
\textbf{Algorithm}        & \textbf{0}       & \textbf{0.1}   & \textbf{0.25}  & \textbf{Avg}              \\
\midrule
ERM                       & $86.7\pm_{1.0}$           & $61.4\pm_{1.3}$           & $48.8\pm_{1.2}$           & \textbf{65.6}                      \\
IRM                       & $87.0\pm_{1.4}$           & $58.8\pm_{0.8}$           & $48.3\pm_{0.8}$           & 64.7                      \\
GroupDRO                  & $84.9\pm_{0.7}$           & $\textbf{61.9}\pm_{0.7}$           & $47.7\pm_{1.3}$           & 64.8                      \\
Mixup                     & $\textbf{88.0}\pm_{0.4}$           & $56.9\pm_{0.9}$           & $\textbf{50.4}\pm_{0.1}$           & 65.1                      \\
VREx                      & $87.4\pm_{0.6}$           & $59.2\pm_{1.5}$           & $46.8\pm_{0.4}$           & 64.5                      \\
\bottomrule
\end{tabular}
\end{center}

\subsubsection{Average Test Accuracy}

\begin{center}
\begin{tabular}{lcccc}
\toprule
\textbf{Algorithm}        & \textbf{0}       & \textbf{ 0.1}   & \textbf{ 0.25}  & \textbf{Avg}              \\
\midrule
ERM                       & $91.9\pm_{0.4}$           & $63.6\pm_{0.6}$           & $50.3\pm_{0.9}$           & 68.6                      \\
IRM                       & $91.8\pm_{0.4}$           & $63.5\pm_{1.0}$           & $50.6\pm_{0.4}$           & 68.6                      \\
GroupDRO                  & $91.9\pm_{0.3}$           & $63.6\pm_{0.8}$           & $51.1\pm_{0.2}$           & 68.9                      \\
Mixup                     & $92.6\pm_{0.3}$           & $62.2\pm_{0.8}$           & $51.2\pm_{0.1}$           & 68.7                      \\
VREx                      & $92.6\pm_{0.7}$           & $63.7\pm_{1.4}$           & $51.0\pm_{0.2}$           & 69.1                      \\
\bottomrule
\end{tabular}
\end{center}

\subsection{Waterbirds+}
\subsubsection{Worst-Group Test Accuracy}

\begin{center}
\begin{tabular}{lcccc}
\toprule
\textbf{Algorithm}        & \textbf{0}       & \textbf{ 0.1}   & \textbf{ 0.25}  & \textbf{Avg}              \\
\midrule
ERM                       & $81.5\pm_{0.4}$           & $64.6\pm_{0.5}$           & $50.4\pm_{0.9}$           & 65.5                      \\
IRM                       & $78.5\pm_{0.8}$           & $63.3\pm_{0.5}$           & $47.8\pm_{1.8}$           & 63.2                      \\
GroupDRO                  & $\textbf{82.6}\pm_{0.4}$           & $65.6\pm_{0.8}$           & $51.2\pm_{0.7}$           & \textbf{66.5}                      \\
Mixup                     & $79.8\pm_{0.9}$           & $61.8\pm_{2.0}$           & $51.3\pm_{1.5}$           & 64.3                      \\
VREx                      & $78.3\pm_{0.3}$           & $\textbf{66.6}\pm_{0.9}$           & $\textbf{51.9}\pm_{0.5}$           & 65.6                      \\
\bottomrule
\end{tabular}
\end{center}

\subsubsection{Average Test Accuracy}

\begin{center}
\begin{tabular}{lcccc}
\toprule
\textbf{Algorithm}        & \textbf{0}       & \textbf{ 0.1}   & \textbf{ 0.25}  & \textbf{Avg}              \\
\midrule
ERM                       & $90.3\pm_{0.1}$           & $\textbf{73.5}\pm_{0.4}$           & $55.7\pm_{0.2}$           & \textbf{73.2}                      \\
IRM                       & $89.0\pm_{0.8}$           & $72.0\pm_{1.3}$           & $\textbf{56.4}\pm_{0.3}$           & 72.5                      \\
GroupDRO                  & $\textbf{90.4}\pm_{0.2}$           & $70.6\pm_{1.1}$           & $54.3\pm_{1.3}$           & 71.8                      \\
Mixup                     & $89.7\pm_{0.5}$           & $72.2\pm_{1.6}$           & $54.7\pm_{0.8}$           & 72.2                      \\
VREx                      & $89.6\pm_{0.7}$           & $71.6\pm_{1.2}$           & $55.2\pm_{0.5}$           & 72.1                      \\
\bottomrule
\end{tabular}
\end{center}

\subsection{CelebA}
\subsubsection{Worst-Group Test Accuracy}

\begin{center}
\begin{tabular}{lcccc}
\toprule
\textbf{Algorithm}        & \textbf{CelebA}           & \textbf{CelebA 0.1}       & \textbf{CelebA 0.25}      & \textbf{Avg}              \\
\midrule
ERM                       & $88.1\pm_{1.2}$           & $\textbf{63.1}\pm_{1.1}$           & $50.2\pm_{1.2}$           & \textbf{67.1}                      \\
IRM                       & $86.9\pm_{0.5}$           & $61.7\pm_{1.8}$           & $\textbf{51.2}\pm_{0.5}$           & 66.6                      \\
GroupDRO                  & $88.0\pm_{0.4}$           & $59.3\pm_{1.7}$           & $49.0\pm_{1.4}$           & 65.4                      \\
Mixup                     & $87.6\pm_{0.4}$           & $63.0\pm_{1.4}$           & $49.5\pm_{0.5}$           & 66.7                      \\
VREx                      & $\textbf{89.3}\pm_{0.7}$           & $60.2\pm_{2.0}$           & $49.5\pm_{1.4}$           & 66.3                      \\
\bottomrule
\end{tabular}
\end{center}

\subsubsection{Average Test Accuracy}

\begin{center}
\begin{tabular}{lcccc}
\toprule
\textbf{Algorithm}        & \textbf{0}           & \textbf{ 0.1}       & \textbf{ 0.25}      & \textbf{Avg}              \\
\midrule
ERM                       & $93.5\pm_{0.2}$           & $67.2\pm_{0.1}$           & $52.3\pm_{0.1}$           & 71.0                      \\
IRM                       & $93.5\pm_{0.1}$           & $67.1\pm_{0.2}$           & $52.1\pm_{0.1}$           & 70.9                      \\
GroupDRO                  & $\textbf{93.8}\pm_{0.1}$           & $67.0\pm_{0.2}$           & $52.1\pm_{0.0}$           & 70.9                      \\
Mixup                     & $93.5\pm_{0.1}$           & $67.6\pm_{0.2}$           & $\textbf{52.6}\pm_{0.1}$           & \textbf{71.2}                      \\
VREx                      & $93.5\pm_{0.1}$           & $\textbf{67.8}\pm_{0.1}$           & $52.5\pm_{0.1}$           & \textbf{71.2}                      \\
\bottomrule
\end{tabular}
\end{center}

\subsection{CelebA+}
\subsubsection{Worst-Group Test Accuracy}

\begin{center}
\begin{tabular}{lcccc}
\toprule
\textbf{Algorithm}        & \textbf{0}           & \textbf{ 0.1}       & \textbf{ 0.25}      & \textbf{Avg}              \\
\midrule
ERM                       & $71.7\pm_{2.9}$           & $65.4\pm_{1.9}$           & $55.0\pm_{1.7}$           & 64.0                      \\
IRM                       & $\textbf{75.4}\pm_{4.3}$           & $62.4\pm_{2.5}$           & $53.0\pm_{1.2}$           & 63.6                      \\
GroupDRO                  & $71.9\pm_{2.0}$           & $61.7\pm_{1.7}$           & $\textbf{57.0}\pm_{1.7}$           & 63.5                      \\
Mixup                     & $71.1\pm_{1.2}$           & $64.6\pm_{1.8}$           & $52.6\pm_{1.7}$           & 62.8                      \\
VREx                      & $73.9\pm_{0.7}$           & $\textbf{69.9}\pm_{2.0}$           & $50.0\pm_{0.7}$           & \textbf{64.6}                      \\
\bottomrule
\end{tabular}
\end{center}

\subsubsection{Average Test Accuracy}

\begin{center}
\begin{tabular}{lcccc}
\toprule
\textbf{Algorithm}        & \textbf{CelebA}           & \textbf{CelebA 0.1}       & \textbf{CelebA 0.25}      & \textbf{Avg}              \\
\midrule
ERM                       & $91.9\pm_{1.5}$           & $76.2\pm_{1.1}$           & $60.3\pm_{0.4}$           & 76.1                      \\
IRM                       & $89.6\pm_{1.0}$           & $76.5\pm_{0.5}$           & $59.8\pm_{0.7}$           & 75.3                      \\
GroupDRO                  & $89.9\pm_{1.6}$           & $76.9\pm_{0.2}$           & $60.5\pm_{0.1}$           & 75.8                      \\
Mixup                     & $91.6\pm_{1.3}$           & $76.6\pm_{0.2}$           & $60.2\pm_{0.5}$           & 76.1                      \\
VREx                      & $90.5\pm_{0.2}$           & $74.8\pm_{0.2}$           & $60.3\pm_{0.3}$           & 75.2                      \\
Fishr                     & $92.3\pm_{0.9}$           & $75.8\pm_{0.4}$           & $60.7\pm_{0.3}$           & 76.3                      \\
\bottomrule
\end{tabular}
\end{center}

\subsection{PACS}
\subsubsection{PACS $\eta$=0.1}

\begin{center}
\begin{tabular}{lccccc}
\toprule
\textbf{Algorithm}   & \textbf{A}           & \textbf{C}           & \textbf{P}           & \textbf{S}           & \textbf{Avg}         \\
\midrule
ERM                  & $81.1\pm_{0.4}$      & $76.4\pm_{0.1}$      & $95.1\pm_{0.3}$      & $75.3\pm_{1.7}$      & 82.0                 \\
IRM                  & $77.0\pm_{0.9}$      & $75.7\pm_{2.1}$      & $96.0\pm_{0.3}$      & $72.7\pm_{2.3}$      & 80.4                 \\
GroupDRO             & $81.7\pm_{0.8}$      & $78.0\pm_{0.8}$      & $94.0\pm_{0.3}$      & $75.8\pm_{0.7}$      & 82.4                 \\
Mixup                & $84.2\pm_{0.6}$      & $77.8\pm_{0.2}$      & $96.3\pm_{0.4}$      & $76.0\pm_{0.1}$      & 83.6                 \\
VREx                 & $80.3\pm_{0.3}$      & $75.9\pm_{1.0}$      & $95.0\pm_{0.4}$      & $74.4\pm_{1.4}$      & 81.4                 \\
\bottomrule
\end{tabular}
\end{center}

\subsubsection{PACS $\eta$=0.25}

\begin{center}
\begin{tabular}{lccccc}
\toprule
\textbf{Algorithm}   & \textbf{A}           & \textbf{C}           & \textbf{P}           & \textbf{S}           & \textbf{Avg}         \\
\midrule
ERM                  & $73.6\pm_{0.4}$      & $67.9\pm_{0.2}$      & $87.4\pm_{1.0}$      & $69.3\pm_{1.5}$      & 74.5                 \\
IRM                  & $72.8\pm_{0.4}$      & $64.9\pm_{3.7}$      & $90.0\pm_{0.9}$      & $56.4\pm_{4.1}$      & 71.0                 \\
GroupDRO             & $74.4\pm_{1.0}$      & $67.7\pm_{1.6}$      & $89.9\pm_{0.8}$      & $66.9\pm_{0.8}$      & 74.7                 \\
Mixup                & $72.4\pm_{1.2}$      & $69.9\pm_{0.6}$      & $90.5\pm_{1.0}$      & $67.8\pm_{2.2}$      & 75.2                 \\
VREx                 & $69.8\pm_{1.2}$      & $68.4\pm_{0.5}$      & $89.4\pm_{0.9}$      & $66.2\pm_{1.0}$      & 73.5                 \\
\bottomrule
\end{tabular}
\end{center}

\subsection{VLCS}
\subsubsection{VLCS $\eta$=0.1}

\begin{center}
\begin{tabular}{lccccc}
\toprule
\textbf{Algorithm}   & \textbf{C}           & \textbf{L}           & \textbf{S}           & \textbf{V}           & \textbf{Avg}         \\
\midrule
ERM                  & $97.2\pm_{0.3}$      & $64.4\pm_{0.5}$      & $68.8\pm_{1.0}$      & $69.8\pm_{0.7}$      & 75.0                 \\
IRM                  & $96.7\pm_{0.7}$      & $64.0\pm_{1.0}$      & $67.8\pm_{0.3}$      & $69.9\pm_{0.7}$      & 74.6                 \\
GroupDRO             & $96.1\pm_{0.9}$      & $64.6\pm_{0.2}$      & $69.2\pm_{0.8}$      & $70.6\pm_{1.4}$      & 75.1                 \\
Mixup                & $97.1\pm_{0.2}$      & $65.5\pm_{1.1}$      & $69.2\pm_{0.3}$      & $70.1\pm_{0.4}$      & 75.5                 \\
VREx                 & $95.1\pm_{1.0}$      & $65.9\pm_{1.1}$      & $68.2\pm_{0.4}$      & $70.6\pm_{0.3}$      & 75.0                 \\
\bottomrule
\end{tabular}
\end{center}

\subsubsection{VLCS $\eta$=0.25}

\begin{center}
\begin{tabular}{lccccc}
\toprule
\textbf{Algorithm}   & \textbf{C}           & \textbf{L}           & \textbf{S}           & \textbf{V}           & \textbf{Avg}         \\
\midrule
ERM                  & $93.6\pm_{0.8}$      & $62.7\pm_{1.3}$      & $64.6\pm_{0.9}$      & $66.8\pm_{0.9}$      & 71.9                 \\
IRM                  & $94.5\pm_{0.4}$      & $62.1\pm_{0.3}$      & $61.6\pm_{0.6}$      & $63.1\pm_{1.0}$      & 70.3                 \\
GroupDRO             & $93.6\pm_{1.1}$      & $62.4\pm_{0.4}$      & $63.8\pm_{0.8}$      & $65.1\pm_{1.1}$      & 71.2                 \\
Mixup                & $94.1\pm_{0.3}$      & $62.5\pm_{0.8}$      & $64.0\pm_{0.3}$      & $66.9\pm_{1.2}$      & 71.9                 \\
VREx                 & $93.9\pm_{1.0}$      & $63.0\pm_{0.6}$      & $63.6\pm_{1.5}$      & $66.8\pm_{0.6}$      & 71.8                 \\
\bottomrule
\end{tabular}
\end{center}

\subsection{OfficeHome}
\subsubsection{OfficeHome $\eta$=0.1}

\begin{center}
\begin{tabular}{lccccc}
\toprule
\textbf{Algorithm}   & \textbf{A}           & \textbf{C}           & \textbf{P}           & \textbf{R}           & \textbf{Avg}         \\
\midrule
ERM                  & $57.4\pm_{0.6}$      & $47.4\pm_{0.4}$      & $71.1\pm_{0.3}$      & $72.7\pm_{0.3}$      & 62.2                 \\
IRM                  & $55.6\pm_{1.1}$      & $46.3\pm_{1.6}$      & $70.7\pm_{1.1}$      & $72.1\pm_{1.0}$      & 61.2                 \\
GroupDRO             & $55.8\pm_{1.0}$      & $47.0\pm_{0.3}$      & $69.5\pm_{1.0}$      & $72.8\pm_{0.4}$      & 61.3                 \\
Mixup                & $57.0\pm_{0.3}$      & $51.8\pm_{0.5}$      & $73.2\pm_{0.4}$      & $73.8\pm_{0.5}$      & 63.9                 \\
VREx                 & $54.7\pm_{1.2}$      & $47.6\pm_{0.3}$      & $68.9\pm_{0.7}$      & $71.1\pm_{0.1}$      & 60.6                 \\
\bottomrule
\end{tabular}
\end{center}

\subsubsection{OfficeHome $\eta$=0.25}

\begin{center}
\begin{tabular}{lccccc}
\toprule
\textbf{Algorithm}   & \textbf{A}           & \textbf{C}           & \textbf{P}           & \textbf{R}           & \textbf{Avg}         \\
\midrule
ERM                  & $47.9\pm_{1.1}$      & $41.3\pm_{1.4}$      & $64.3\pm_{0.4}$      & $66.0\pm_{0.7}$      & 54.9                 \\
IRM                  & $46.9\pm_{1.9}$      & $39.1\pm_{2.4}$      & $63.2\pm_{1.4}$      & $66.5\pm_{1.9}$      & 53.9                 \\
GroupDRO             & $48.6\pm_{0.4}$      & $39.2\pm_{0.7}$      & $62.4\pm_{0.6}$      & $66.1\pm_{0.4}$      & 54.1                 \\
Mixup                & $53.1\pm_{0.6}$      & $42.0\pm_{1.2}$      & $66.3\pm_{0.2}$      & $68.4\pm_{0.1}$      & 57.5                 \\
VREx                 & $47.1\pm_{2.4}$      & $39.1\pm_{0.8}$      & $61.0\pm_{0.3}$      & $64.6\pm_{0.8}$      & 53.0                 \\
\bottomrule
\end{tabular}
\end{center}

% \subsubsection{Averages}

% \begin{center}
% \begin{tabular}{lccc}
% \toprule
% \textbf{Algorithm}        & \textbf{OfficeHome 0.1}   & \textbf{OfficeHome 0.25}  & \textbf{Avg}              \\
% \midrule
% ERM                       & $62.2\pm_{0.1}$           & $54.9\pm_{0.3}$           & 58.5                      \\
% IRM                       & $61.2\pm_{1.2}$           & $53.9\pm_{1.8}$           & 57.6                      \\
% GroupDRO                  & $61.3\pm_{0.4}$           & $54.1\pm_{0.3}$           & 57.7                      \\
% Mixup                     & $63.9\pm_{0.1}$           & $57.5\pm_{0.3}$           & 60.7                      \\
% VREx                      & $60.6\pm_{0.5}$           & $53.0\pm_{0.9}$           & 56.8                      \\
% \bottomrule
% \end{tabular}
% \end{center}

\subsection{TerraIncognita}
\subsubsection{TerraIncognita $\eta$=0.1}

\begin{center}
\begin{tabular}{lccccc}
\toprule
\textbf{Algorithm}   & \textbf{L100}        & \textbf{L38}         & \textbf{L43}         & \textbf{L46}         & \textbf{Avg}         \\
\midrule
ERM                  & $58.1\pm_{1.9}$      & $52.1\pm_{1.0}$      & $57.9\pm_{0.4}$      & $40.5\pm_{0.4}$      & 52.1                 \\
IRM                  & $47.6\pm_{3.3}$      & $51.9\pm_{1.2}$      & $53.4\pm_{1.0}$      & $40.4\pm_{2.5}$      & 48.3                 \\
GroupDRO             & $58.0\pm_{1.5}$      & $49.1\pm_{1.0}$      & $59.3\pm_{0.2}$      & $40.8\pm_{0.5}$      & 51.8                 \\
Mixup                & $62.4\pm_{1.7}$      & $51.4\pm_{2.0}$      & $58.3\pm_{1.0}$      & $40.6\pm_{1.2}$      & 53.2                 \\
VREx                 & $49.7\pm_{0.9}$      & $46.7\pm_{0.9}$      & $56.0\pm_{1.2}$      & $40.8\pm_{0.6}$      & 48.3                 \\
\bottomrule
\end{tabular}
\end{center}

\subsubsection{TerraIncognita $\eta$=0.25}

\begin{center}
\begin{tabular}{lccccc}
\toprule
\textbf{Algorithm}   & \textbf{L100}        & \textbf{L38}         & \textbf{L43}         & \textbf{L46}         & \textbf{Avg}         \\
\midrule
ERM                  & $52.3\pm_{0.2}$      & $48.8\pm_{0.3}$      & $55.8\pm_{0.6}$      & $37.2\pm_{1.4}$      & 48.5                 \\
IRM                  & $41.5\pm_{5.9}$      & $46.1\pm_{0.2}$      & $51.0\pm_{2.3}$      & $39.0\pm_{2.3}$      & 44.4                 \\
GroupDRO             & $56.9\pm_{2.7}$      & $48.7\pm_{0.4}$      & $55.4\pm_{1.0}$      & $41.5\pm_{0.6}$      & 50.6                 \\
Mixup                & $62.2\pm_{1.8}$      & $47.5\pm_{1.4}$      & $55.3\pm_{0.6}$      & $40.2\pm_{1.0}$      & 51.3                 \\
VREx                 & $51.0\pm_{2.6}$      & $51.5\pm_{1.2}$      & $54.6\pm_{0.5}$      & $38.5\pm_{0.9}$      & 48.9                 \\
\bottomrule
\end{tabular}
\end{center}

\end{document}